\documentclass[twoside]{article}
\pdfoutput=1
 \usepackage[accepted]{aistats2020}

\usepackage[round]{natbib}

\bibliographystyle{apalike}

\usepackage{nicefrac}
\usepackage[utf8]{inputenc} %
\usepackage[T1]{fontenc}    %
\usepackage[hidelinks]{hyperref}       %
\usepackage{url}            %
\usepackage{booktabs}       %
\usepackage{amsfonts}       %
\usepackage{nicefrac}       %
\usepackage{bbm}
\usepackage{listings}
\lstset{columns=fullflexible,basicstyle=\ttfamily}

\usepackage{microtype}      %
\usepackage{algorithmicx,algpseudocode,algorithm}

\usepackage{comment}
\usepackage{enumitem}

\usepackage{appendix}

\usepackage{wrapfig,tikz}
\usepackage{amsmath,longtable,fancyhdr,booktabs,multirow,graphicx,float}
\usepackage{adjustbox, bigstrut, tabularx, multirow, makecell, diagbox}
\usepackage{amssymb,xcolor,amsthm}
\usepackage{subfigure}
\usepackage{color}
\usepackage{colortbl}
\usepackage{theoremref}

\newcommand{\mbb}[1]{\mathbb{#1}}
\newcommand*{\tran}{^{\mkern-1.5mu\mathsf{T}}}

\newcommand{\mcal}{\mathcal}

\newtheorem{lemma}{Lemma}
\newtheorem{theorem}{Theorem}

\newtheorem{definition}{Definition}

\newcommand{\be}{\begin{equation}}
    \newcommand{\ee}{\end{equation}}

\definecolor{Gray}{gray}{0.85}
\definecolor{LightCyan}{rgb}{0.88,1,1}

\newcolumntype{a}{>{\columncolor{Gray}}c}
\newcolumntype{b}{>{\columncolor{white}}c}

\makeatletter
\usepackage{xspace}
\def\@onedot{\ifx\@let@token.\else.\null\fi\xspace}
\DeclareRobustCommand\onedot{\futurelet\@let@token\@onedot}

\newcommand{\figref}[1]{Figure~\ref{#1}}
\newcommand{\algoref}[1]{Algorithm~\ref{#1}}

\newcommand{\secref}[1]{Section~\ref{#1}}
\newcommand{\tabref}[1]{Tab\onedot~\ref{#1}}
\newcommand{\thmref}[1]{Theorem~\ref{#1}}

\newcommand{\bfx}{\mathbf{x}}
\newcommand{\bfA}{\mathbf{A}}

\newcommand{\bfZ}{\mathbf{Z}}
\newcommand{\bfz}{\mathbf{z}}
\newcommand{\bfs}{\mathbf{s}}
\newcommand{\MLP}{\operatorname{MLP}}
\newcommand{\CAT}{\operatorname{CONCAT}}
\newcommand{\bftheta}{{\boldsymbol{\theta}}}
\newcommand{\bfalpha}{{\boldsymbol{\alpha}}}
\newcommand{\bfbeta}{\boldsymbol{\beta}}

\newcommand{\network}{\mathbf{s}}

\def\eg{\emph{e.g}\onedot}

\def\ie{\emph{i.e}\onedot}

\def\aka{a.k.a\onedot}
\def\iid{i.i.d\onedot}
\def\MODEL{EDP-GNN}

\begin{document}

\runningtitle{Permutation Invariant Graph Generation via Score-Based Generative Modeling}

\runningauthor{Chenhao Niu, Yang Song, ... , Stefano Ermon}

\twocolumn[

\aistatstitle{Permutation Invariant Graph Generation via \\
Score-Based Generative Modeling}

\aistatsauthor{ Chenhao Niu$^1$, Yang Song$^2$, Jiaming Song$^2$, Shengjia Zhao$^2$, Aditya Grover$^2$, Stefano Ermon$^2$}

\aistatsaddress{ $^1$Tsinghua University \And $^2$Stanford University} 
]

\begin{abstract}
Learning generative models for graph-structured data is challenging because graphs are discrete, combinatorial, and the underlying data distribution is invariant to the ordering of nodes. However, most of the existing generative models for graphs are not invariant to the chosen ordering, which might lead to an undesirable bias in the learned distribution. To address this difficulty, we propose a permutation invariant approach to modeling graphs, using the recent framework of score-based generative modeling. In particular, we design a permutation equivariant, multi-channel graph neural network to model the gradient of the data distribution at the input graph (\aka, the score function). This permutation equivariant model of gradients implicitly defines a permutation invariant distribution for graphs. We train this graph neural network with score matching and sample from it with annealed Langevin dynamics. In our experiments, we first demonstrate the capacity of this new architecture in learning discrete graph algorithms. For graph generation, we find that our learning approach achieves better or comparable results to existing models on benchmark datasets.

\end{abstract}

\section{INTRODUCTION}

Graphs are used to capture relational structure in many domains, including knowledge bases \citep{hamaguchi2017knowledge}, social networks \citep{hamilton2017inductive,kipf2016semi}, protein interaction networks \citep{fout2017protein}, and physical systems \citep{batagelj2003m}.
Generating graphs using suitable probabilistic models has many applications, such as drug design~\citep{duvenaud2015convolutional,gomez2018automatic,li2018learning}, creating computation graphs for architecture search~\citep{xie2019exploring}, as well as research in network science~\citep{watts1998collective,albert2002statistical,leskovec2010kronecker}.

While many stochastic models of graphs have been proposed, the idea of learning statistical generative models of graphs from data has recently gained significant attention. 
One approach is to use latent variable generative models similar to variational autoencoders~\citep{kingma2013auto}. Examples include GraphVAE~\citep{simonovsky2018graphvae}, Graphite~\citep{grover2018graphite}, and junction tree variational autoencoders~\citep{jin2018junction}. These models typically use a graph neural network (GNN)~\citep{gori2005new,scarselli2008graph} to encode graph data to a latent space, and generate samples by decoding latent variables sampled from a prior distribution. The second paradigm is autoregressive graph generative models~\citep{li2018learning,you2018graph,liao2019lanczosnet}, where graphs are generated sequentially, one node (or one subgraph) at a time. \label{sec:intro}

Although these models have achieved great success, they are not satisfying in terms of capturing the permutation invariance properties of graphs. Permutation invariance is a fundamental inductive bias of graph-structured data. For a graph with $N$ nodes, there are up to $N!$ different adjacency matrices that are equivalent representations of the same graph. Therefore, a graph generative model should ideally assign the same probability to each of these equivalent adjacency matrices. It is challenging, however, to enforce permutation invariance in variational autoencoders or autoregressive models. Some previous approaches only approximately induce permutation invariance: GraphVAE~\citep{simonovsky2018graphvae} uses inexact graph matching techniques requiring up to $O(N^4)$ operations, whereas the model in \cite{li2018learning} augments the training data by randomly permuting the nodes of existing data. Other approaches 
instead focus on selecting a specific node ordering based on heuristics: GraphRNN~\citep{you2018graphrnn} uses random breadth-first search (BFS) to determine an ordering, and GRAN~\citep{liao2019lanczosnet} adaptively chooses an ordering depending on the input graph from a family of pre-defined node orderings.

To better capture the permutation invariance of graphs, we propose a new graph generative model using the framework of score-based generative modeling~\citep{song2019generative}. Intuitively, this approach trains a model to capture the vector field of gradients of the log data density of graphs (\aka, scores). Contrary to likelihood-based models such as variational auto-encoders and autoregressive models, score-based generative modeling imposes fewer constraints on the model architectures (e.g., a score does not have to be normalized). This enables the use of function families with desirable inductive biases, such as permutation invariance. In particular, we leverage graph neural networks~\citep{scarselli2008graph} to build a permutation equivariant model %
for the scores of the distribution over graphs we wish to learn. 
As shown later in the paper, this implicitly defines a permutation invariant distribution over adjacency matrices representing graphs. 

As in other classes of deep generative models, the neural architecture used in score-based generative modeling is critical to its success. In this work, we introduce a new type of graph neural networks, named \MODEL{}, with learnable multi-channel adjacency matrices. 
In our experiments, we first test the effectiveness of \MODEL{} for the task of learning graph algorithms, where it significantly outperforms traditional GNNs. Next, we evaluate the generation quality of our score-based models using MMD \citep{gretton2012kernel} metrics on several graph datasets, where we achieved comparable performance to GraphRNN~\citep{you2018graphrnn}, a competitive method for generative modeling of graphs.

\section{PRELIMINARIES}

\subsection{Notations}
\label{sec:Def}

For each weighted undirected graph, we can choose an ordering of nodes $\pi$ and represent it with an adjacency matrix $\bfA^\pi$. Here we use the superscript $\pi$ to indicate that the rows/columns of $\bfA^\pi$ are arranged in accordance with a specific node ordering $\pi$. When the graph is undirected, the corresponding adjacency matrix $\bfA^\pi$ is symmetric. We denote the set of adjacency matrices as $\mcal{A} = \{\bfA \in \mbb{R}^{N\times N} \mid \bfA=\bfA\tran, N \in \mbb{N}^{+} \}$. 

A distribution of graphs can be represented as a distribution of adjacency matrices $p(\bfA^\pi)$. Since graphs are invariant to permutations, $\bfA^{\pi_1}$ and $\bfA^{\pi_2}$ always represent the same graph for any different node orderings $\pi_1$ and $\pi_2$. This permutation invariance also implies that $\forall \pi_1 \neq \pi_2: p(\bfA^{\pi_1}) = p(\bfA^{\pi_2})$, \ie, the distribution of adjacency matrices is invariant to node permutations. In the sequel, we often omit the superscript $\pi$ in $\bfA^\pi$ when not emphasizing any specific node ordering.

\subsection{Graph Neural Network (GNN)}

Graph neural networks are a family of neural networks that map graphs to vector representations using message-passing type operations on node features~\citep{gori2005new,scarselli2008graph}. They are natural models for graph-structured data; for example, GIN \citep{xu2018powerful} is one type of GNN that is proved to be as expressive as the Weisfeiler-Lehman graph isomorphism test (WL-test). The message passing mechanism guarantees that the output representation $f_\text{GNN}(\bfA^\pi)$ of an input adjacency matrix $\bfA^\pi$ is \emph{equivariant to permutations} of the node ordering $\pi$.%

\subsection{Score-Based Generative Modeling}
Score-based generative modeling~\citep{song2019generative} is a class of  generative models. For a probability density function $p(\bfx)$, the score function is defined as $\nabla_\bfx \log p(\bfx)$. Instead of directly modeling the density function of the data distribution $p_\text{data}(\bfx)$, score-based generative modeling estimates the data score function $\nabla_\bfx \log p_\text{data}(\bfx)$.
The advantage is that \emph{the score function can be easier to model than the density function}.

For better score estimation, following~\citep{song2019generative} we perturb the data with Gaussian noise of different intensities, and estimate the scores jointly for all noise levels. 
We train a noise conditional model $\bfs_\bftheta (\bfx; \sigma)$ (e.g., a neural network parameterized by $\bftheta$) to approximate the score function corresponding to noise level $\sigma$. Given a data distribution $p_\text{data}(\bfx)$, a noise distribution $q_\sigma(\tilde{x} \mid \bfx)$ (\eg, $\mcal{N}(\tilde{\bfx} \mid \bfx, \sigma^2 I)$), and a sequence of noise levels $\{\sigma_i\}_{i=1}^L$, the training loss $\mcal{L}(\bftheta; \{\sigma_i\}_{i=1}^L)$ is defined as:
\begin{align}
 \sum_{i=1}^L \frac{\sigma_i^2}{2L} \mathbb{E}\left[ \| \bfs_\bftheta(\tilde{\bfx}, \sigma_i) - \nabla_{\tilde{\bfx}} \log q_{\sigma_i}(\tilde{\bfx} \mid \bfx)\|_2^2 \right].
\end{align}
where the expectation is taken with respect to the sampling process: $\bfx \sim p_\text{data}(\bfx), \tilde{\bfx} \sim q_{\sigma_i}(\tilde{\bfx} \mid \bfx)$.
We note that all expectations in $\mcal{L}(\bftheta; \{\sigma_i\}_{i=1}^L)$ can be estimated with \iid samples from $p_\text{data}(\bfx)$ and $q_\sigma(\cdot \vert \bfx)$, which are easy to obtain. The objective is $\min\limits_{\bftheta}\mcal{L}(\bftheta; \{\sigma_i\}_{i=1}^L)$.

After the conditional score model $s_\bftheta(\bfx; \sigma)$ has been trained, we use annealed Langevin dynamics~\citep{song2019generative} for sample generation (see \algoref{alg:anneal}).
\begin{algorithm}[t]
	\caption{Annealed Langevin dynamics sampling.}
	\label{alg:anneal}
	\begin{algorithmic}[1]
	    \Require{$\{\sigma_i\}_{i=1}^L, \epsilon, T$} \Comment{$\epsilon$ is smallest step size; $T$ is the number of iteration for each noise level.}
	    \State{Initialize $\tilde{\bfx}_0$}
	    \For{$i \gets 1$ to $L$}
	        \State{$\alpha_i \gets \epsilon \cdot \sigma_i^2/\sigma_L^2$} \Comment{$\alpha_i$ is the step size.}
            \For{$t \gets 1$ to $T$}
                \State{Draw $\bfz_t \sim \mcal{N}(0, I)$}
                \State{$\tilde{\bfx}_{t} \gets \tilde{\bfx}_{t-1} + \dfrac{\alpha_i}{2} \bfs_\bftheta(\tilde{\bfx}_{t-1}, \sigma_i) + \sqrt{\alpha_i}~ \bfz_t$}
            \EndFor
            \State{$\tilde{\bfx}_0 \gets \tilde{\bfx}_T$}
        \EndFor
        \item[]
        \Return{$\tilde{\bfx}_T$}
	\end{algorithmic}
\end{algorithm}

\section{SCORE-BASED GENERATIVE MODELING FOR GRAPHS}

Contrary to the weighted graphs we used to define the probability density function in \secref{sec:Def}, in real-world problems unweighted graphs are much more common, which means entries in the adjacency matrix $\bfA$ can only be either 0 or 1. 
While the score-based method~\citep{song2019generative} was initially proposed for handling continuous data, it can be adopted to generate discrete ones as well. Below, we first show our modifications of score-based generative modeling for graph generation, and then introduce our specialized neural network architecture \MODEL{} for the noise conditional model $\bfs_\bftheta (\bfA; \sigma)$, where $\bfs_\bftheta (\cdot; \sigma): \mcal{A} \rightarrow \mcal{A}$. 

\subsection{Noise Distribution}
We add Gaussian perturbations to adjacency matrices and define the noise distribution $q_{\sigma}(\tilde{\bfA} \mid \bfA)$ as follows
\begin{multline}
  \resizebox{0.9\linewidth}{!}{$\displaystyle
  \begin{cases}
  \prod_{i < j} \frac{1}{\sqrt{2\pi}\sigma} \exp \bigg\{-\frac{(\tilde{\bfA}_{[i,j]} - \bfA_{[i,j]})^2}{2\sigma^2}\bigg\}, &~~ \text{if $\tilde{\bfA} = \tilde{\bfA}\tran$}\\
  0, &~~ \text{otherwise}.
  \end{cases}$}
  \label{eq:noise}
\end{multline}
Intuitively, we only add Gaussian noise to the upper triangular part of the adjacency matrix, because we focus on undirected graphs whose adjacency matrices are symmetric.

Since $\nabla_{\tilde{\bfA}} \log q_{\sigma}(\tilde{\bfA} | \bfA)=-(\tilde{\bfA}-\bfA) / \sigma^{2}$, the training loss of $\network_{\bftheta}(\bfA, \sigma)$ is
\begin{multline} \label{eq:loss}
\resizebox{0.87\columnwidth}{!}{$\displaystyle
\mcal{L}(\bftheta; \{\sigma_i\}_{i=1}^L) \triangleq \frac{1}{2L}\sum_{i=1}^L \sigma_i^2 \mathbb{E}\bigg[
  \bigg\|\network_{\bftheta}(\tilde{\bfA}, \sigma)+\frac{\tilde{\bfA}-\bfA}{\sigma^{2}}\bigg\|_{2}^{2}\bigg]$}.
\end{multline}
where the expectation is over the sampling process defined via $\bfA \sim p_{\text {data }}(\bfA)$ and $\tilde{\bfA} \sim q_{\sigma}(\tilde{\bfA}|\bfA)$. The objective is $\min\limits_{\bftheta}\mcal{L}(\bftheta; \{\sigma_i\}_{i=1}^L)$.

Note that the supports of the noise distributions $\{q_{\sigma_i}\}_{i=1}^L$ span $\mbb{R}^{N\times N}$, where $N$ is the number of nodes of the input graph. Therefore, the scores of perturbed distributions corresponding to all noise levels are well-defined, regardless of whether the training samples are discrete or not.

\subsection{Sampling}

To generate $\tilde{\bfA}$, we first sample $N$, which is the number of nodes to be generated, and then sample $\tilde{\bfA} \in \mbb{R}^{N\times N}$ with annealed Langevin dynamics. This amounts to factorizing $p(\bfA) = \sum_{N=1}^{\infty} p(\bfA \mid \bfA \in \mbb{R}^{N\times N}) p(N)$. Implementation-wise, we sample $N$ from the empirical distribution of number of nodes in the training dataset, as done in \citep{li2018multi}. When doing annealed Langevin dynamics, we first initialize $\tilde{\bfA}_0$ using folded normal distributions, \ie,
\begin{align*}
(\tilde{\bfA}_0)_{[i,j]} = \begin{cases}
|\varepsilon_{[i,j]}|, & i< j \\
(\tilde{\bfA}_0)_{[j,i]}, & \text{otherwise},
\end{cases}
\end{align*}
where all $\varepsilon_{[i,j]} \sim \mathcal{N}(0, 1)$. %
Then, we update $\tilde{\bfA}$ by iteratively sampling from a series of trained conditional score models $\{\bfs_\bftheta(\bfA;\sigma_i)\}_{i=1}^L$ using Langevin dynamics. For each of the conditional score model $\bfs_\bftheta(\bfA;\sigma_i)$, we run Langevin dynamics for $T$ steps, where the series $\{\sigma_i\}_{i=1}^L$ is annealed down over the process such that $\sigma_1$ is large but $\sigma_L$ is small enough that it can be ignored. %
As a minor modification, we change the noise term $\bfz_t$ in \algoref{alg:anneal} to a symmetric one $\tilde{\bfz_t}$, given by 
\begin{align*}
(\tilde{\bfz}_t)_{[i,j]} = \begin{cases}
(\tilde{\bfz}_t)_{[i,j]}, & i< j \\
(\tilde{\bfz}_t)_{[j,i]}, &  i \geq j,
\end{cases}
\end{align*}
which accouts for the symmetry of adjacency matrices.

Score-based generative modeling provides samples in the continuous space, whereas graph data are often discrete. In order to obtain discrete samples, we quantize the generated continuous adjacency matrix (denoted as $\tilde{\bfA}$) to a binary one (denoted as $\bfA^{(\mathrm{sample})}$) at the end of annealed Langevin dynamics. Formally, this quantization operation is defined as
\begin{equation}\label{eq:quantize}
  \bfA^{(\mathrm{sample})}_{[i,j]} = \mathbbm{1}_{\tilde{\bfA}_{[i,j]} >0.5}
\end{equation}
where $\mathbbm{1}$ is an indicator function that evalutes to 1 when the condition holds and 0 otherwise.

\subsection{Permutation Equivariance and Invariance}

Permutation invariance is a desirable property of graph generative models, since the true distribution $p_{\text{data}}(\bfA)$ is inherently permutation invariant. We show that by using a permutation equivariant score function $\bfs_\bftheta (\bfA; \sigma)$, the corresponding distribution is permutation invarant.

\begin{theorem}\label{th:perm-inv}
	If $\network: \mathbb{R}^{N\times N} \rightarrow \mathbb{R}^{N\times N}$ is a permutation equivariant function, then the scalar function $f_{\network}= \int_{\gamma[\mathbf{0}, \bfA]}\langle\network(\mathbf{X}), \operatorname{d}\mathbf{X}\rangle_{\mathrm{F}} + C$ is permutation invariant, where $\langle\bfA, \mathbf{B}\rangle_{\mathrm{F}}=\operatorname{tr}({\bfA^\intercal} \mathbf{B})$ is the Frobenius inner product, $\gamma[\mathbf{0}, \bfA]$ is any curve from $\mathbf{0} = \{0\}_{N\times N}$ to $\bfA$, and $C \in \mathbb{R}$ is a constant.
\end{theorem}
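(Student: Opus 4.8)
The plan is to use the fact that a node permutation acts on an adjacency matrix by conjugation, $\bfA \mapsto \bfP\bfA\bfP^\T$ with $\bfP$ a permutation matrix, and that equivariance of $\network$ means $\network(\bfP\bfA\bfP^\T) = \bfP\,\network(\bfA)\,\bfP^\T$. To prove invariance I must show $f_{\network}(\bfP\bfA\bfP^\T) = f_{\network}(\bfA)$ for every $\bfP$. I would compute the defining line integral at the permuted endpoint by pushing a curve that reaches $\bfA$ forward under conjugation by $\bfP$, and then check that the Frobenius integrand is left unchanged.

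Concretely, fix a curve $\gamma(t)$, $t\in[0,1]$, with $\gamma(0)=\mathbf{0}$ and $\gamma(1)=\bfA$. Since $\bfP\mathbf{0}\bfP^\T = \mathbf{0}$, the conjugated path $t\mapsto \bfP\gamma(t)\bfP^\T$ runs from $\mathbf{0}$ to $\bfP\bfA\bfP^\T$ and is therefore an admissible curve in the definition of $f_{\network}(\bfP\bfA\bfP^\T)$; its velocity is $\bfP\gamma'(t)\bfP^\T$. Substituting and invoking equivariance, the integrand becomes $\langle \bfP\,\network(\gamma(t))\,\bfP^\T,\ \bfP\gamma'(t)\bfP^\T\rangle_{\mathrm{F}}$. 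The final ingredient is that $\bfP$ is orthogonal, $\bfP^\T\bfP = I$, so by the cyclic property of the trace,
$$\langle \bfP\,\mathbf{M}\,\bfP^\T,\ \bfP\,\mathbf{N}\,\bfP^\T\rangle_{\mathrm{F}} = \operatorname{tr}\!\big(\bfP\,\mathbf{M}^\T\,\mathbf{N}\,\bfP^\T\big) = \operatorname{tr}\!\big(\mathbf{M}^\T\mathbf{N}\big) = \langle\mathbf{M},\mathbf{N}\rangle_{\mathrm{F}}.$$
Hence the integrand equals $\langle\network(\gamma(t)),\gamma'(t)\rangle_{\mathrm{F}}$, the two line integrals coincide, and adding the common constant $C$ yields $f_{\network}(\bfP\bfA\bfP^\T)=f_{\network}(\bfA)$.

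The point that needs the most care is the well-definedness implicit in the phrase ``any curve'': for $f_{\network}$ to be a function of the endpoint alone, the form $\langle\network(\mathbf{X}),\mathrm{d}\mathbf{X}\rangle_{\mathrm{F}}$ must be path independent, i.e.\ $\network$ must be conservative (a gradient field, which is exactly the regime in which $\network$ models a score). I would either assume this path independence outright, so that ``any curve'' is legitimate and the conjugated curve above may be used freely, or sidestep it by fixing the equivariant choice of the straight segment $\gamma(t)=t\bfA$, whose conjugate $t\,\bfP\bfA\bfP^\T$ is precisely the straight segment to the permuted endpoint; with that canonical choice the same computation proves invariance without appealing to path independence at all. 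The remaining ingredients---equivariance of $\network$ and orthogonal invariance of the Frobenius inner product---are routine.
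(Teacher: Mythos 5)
Your proof is correct and is essentially the paper's own argument in different notation: the paper writes the permutation as index relabeling $\mathbf{X}^{[\pi]}$ instead of conjugation $\bfP\mathbf{X}\bfP^\T$, pushes the curve from $\mathbf{0}$ to $\bfA$ forward under the permutation, applies equivariance, and then invokes a lemma stating that the Frobenius inner product is invariant under simultaneous permutation of both arguments --- which is exactly your trace-cyclicity computation. The one point where you go beyond the paper is in explicitly flagging the path-independence (well-definedness) question behind the phrase ``any curve''; the paper leaves that implicit.
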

\begin{proof}
	See Appendix~\ref{app:proof}.
\end{proof}
Since the gradient of log-likelihood estimation $\network_\bftheta(\bfA) = \nabla_\bfA \log p_\bftheta(\bfA)$ is permutation equivariant, the implicitly defined log-likelihood function $\log p_\bftheta(\bfA)$ is permutation invariant, according to \thmref{th:perm-inv}, given below by the line integral of $\network_\bftheta(\mathbf{X})$.
\begin{displaymath}
  \log p_\bftheta(\bfA) = \int_{\gamma[\mathbf{0}, \bfA]}\langle\network_\bftheta(\mathbf{X}), \operatorname{d}\mathbf{X}\rangle_{\mathrm{F}} + \log p_\bftheta(\mathbf{0})
\end{displaymath}

\subsection{Edgewise Dense Prediction Graph Neural Network (\MODEL)}

\begin{figure*}[t]
\centering
	\includegraphics[width=0.88\linewidth]{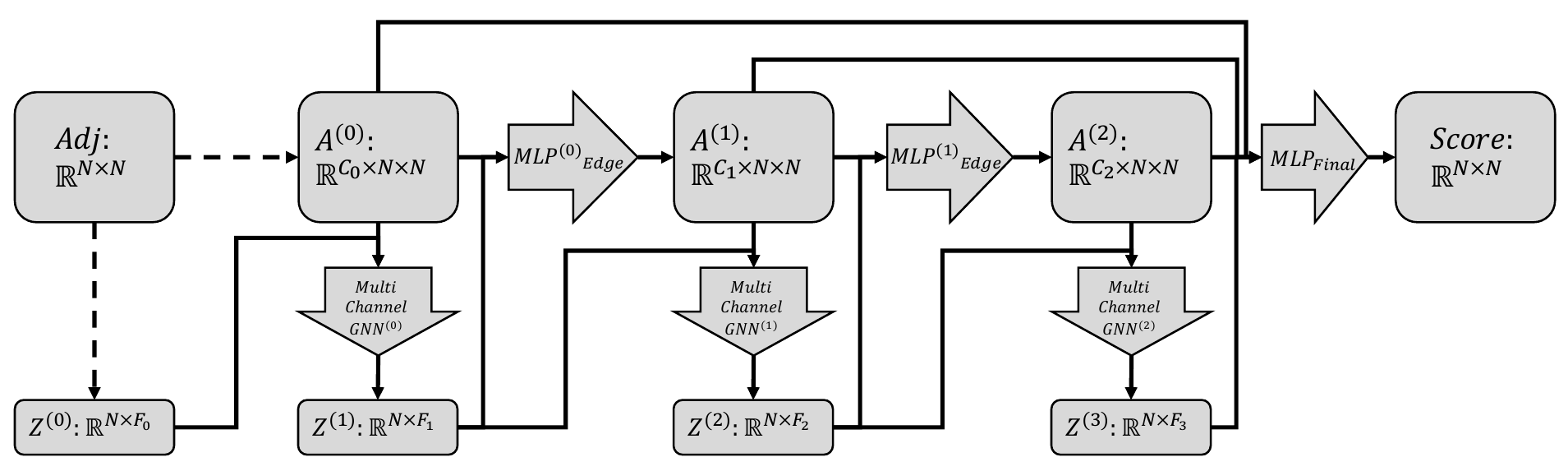}
\caption{This figure shows an \MODEL\space with three layers. The input is an adjacency matrix of a graph with $N$ nodes given a fixed node ordering, and the outputs are edge representations. The dashed lines are preprocessing steps, and solid lines represent network computations.}
\label{fig:network}
\end{figure*}

Below, we introduce a GNN-based score network $\bfs_\bftheta (\bfA; \sigma)$ that can effectively model the scores of graph distributions while being permutation equivariant.

\subsubsection{Multi-Channel GNN Layer}

We introduce the multi-Channel GNN layer , an extended version of the GIN \citep{xu2018powerful} layer, which serves as a basic component of our \MODEL \space model. The intuition is to run message-passing simultaneously on many different graphs, and collect the node features from all the channels via concatenation. For a $C$-channel GNN layer with $M$ message-passing steps, the $m$-th message-passing step can be expressed as follows,
\begin{align*}
	 \tilde{\bfZ}^{(m+1)}_{[c, \cdot]} &= \bfA^{(k)}_{[c, \cdot, \cdot]} \bfZ^{(m)}_{[\cdot]}, \quad \text{for } c = 0, 1, \dots, C-1,\\
	 \bfZ^{(m+1)}_i &= \MLP^{(m)}_{\text{Node}}\bigg(\CAT \bigg(  \\  
	 &\tilde{\bfZ}^{(m+1)}_{[c, i]} + (1+\epsilon) \bfZ^{(m)}_i | c=0, \dots,C-1  \bigg)\bigg), 
\end{align*}
where $i$ is the index of nodes, $C$ is the number of channels, $\bfA^{(k)} \in \mathbb{R}^{C \times N \times N}$ is the multi-channel adjacency matrix, and $\bfZ^{(m)} \in \mathbb{R}^{N \times F^{(m)}}$ is the vector of node features. Here $\epsilon$ is a learnable parameter, the same as in the original GIN, $\CAT$ stands for the concatenation operation, and $\MLP^{(m)}_{\text{Node}}$ transforms each node feature using a multilayer perceptron.

After $M$ steps of message-passing, we use the same concatenation operation as GIN to obtain node features. Specifically, for each node $v_i$, the output feature is given by
\begin{displaymath}
  (\bfZ_{\text{out}})_i = \CAT(\bfZ^{(m)}_i | m=0,1,\dots, M-1).
\end{displaymath}

Henceforth, we denote our Multi-Channel GNN layer as
 $$\bfZ_{\text{out}} = \operatorname{MultiChannelGNN}(\bfA, \bfZ_{\text{in}}). $$

\subsubsection{\MODEL\space Layer}

The \MODEL\space layer is the key component of our model. It transforms the input adjacency matrix to another one, allowing us to adaptively change the process of message passing. The intuition is similar to neural networks for image dense prediction tasks (\eg, semantic parsing), where convolutional layers transform the input image to a feature map in a pixelwise manner, leveraging local information around each pixel location. Similarly, we want our GNN layer to extract edgewise features and map them to a new adjacency matrix, using local information (which is defined in terms of connectivity) of each node in the graph.

One \MODEL\space layer has two steps:

\begin{enumerate}
\item \textbf{Node feature inference:} Using MultiChannelGNN to encode the local structure of different channels of the graph into node features, given by
\begin{equation}
  \bfZ^{(k+1)} = \operatorname{MultiChannelGNN}^{(k)}(\bfA^{(k)}, \bfZ^{(k)});
\end{equation}

\item \textbf{Edge feature inference:} Updating the feature vector of each edge based on the current features of the edge and the updated feature vector of the two endpoints. For each edge $\{v_i, v_j\}$, this operation is given by
\begin{multline*}
\resizebox{0.9\columnwidth}{!}{$
\mathbf{\tilde{A}}^{(k+1)}_{[\cdot, i,j]} = \MLP^{(k)}_{\text{Edge}}\bigg(
\CAT(\bfA^{(k)}_{[\cdot,i,j]}, \bfZ^{(k+1)}_i, \bfZ^{(k+1)}_j)\bigg)$},	
\end{multline*}
where $\MLP^{(k)}_{\text{Edge}}$ denotes a multilayer perceptron applied to edge features. To ensure symmetry, the new adjacency matrix is given by
\begin{equation}
\bfA^{(k+1)} = \mathbf{\tilde{A}}^{(k+1)} + (\mathbf{\tilde{A}}^{(k+1)})\tran.
\end{equation}
\end{enumerate}

\subsubsection{Input and Output Layers}

\textbf{Input layer:} Input graphs need to be preprocessed before they can be fed into our \MODEL\space model. In particular, we take adjacency matrices of two channels as the input, where the first channel is the original adjacency matrix of an input graph, and the other channel is the negated version of the same adjacency matrix, where each entry is flipped. The node features are initialized using the weighted degrees. Formally,
\begin{displaymath}
  \begin{aligned}
	\bfZ^{(0)}_i &= \sum_j\mathbf{Adj}_{[i, j]}, \forall v_i \in \mathcal{V} \\
	\bfA^{(0)}_{[0, \cdot, \cdot]} &= \mathbf{Adj} \\
	\bfA^{(0)}_{[1, \cdot, \cdot]} &= 1 - \mathbf{Adj}
	\end{aligned}
\end{displaymath}
where $\mathbf{Adj}$ is the adjacency matrix of an input graph. If we have node features $\mathbf{X} \in \mathbb{R}^{N\times F_0}$ from data, then we use the following initialization for each node $v_i$
\begin{displaymath}
  \bfZ^{(0)}_i = \CAT\bigg(\mathbf{X}_i, \sum_j \mathbf{Adj}_{i, j}\bigg).
\end{displaymath}
\textbf{Output layer:} To get the output, we employ a similar approach to \cite{xu2018representation}, where we aggregate the information from all previous layers to produce a set of permutation equivariant edge features. This can effectively collect information extracted in shallower layers.
Formally, for each edge $\{v_i, v_j\}$, the output features are given by
\begin{align*}
\resizebox{\columnwidth}{!}{$\displaystyle
\network_\bftheta(\bfA)_{[i,j]} = \MLP_{\text{final}}\left(\CAT\left(\bfA^{(k)}_{[\cdot, i, j]}| k=0, \dots, K-1 \right)\right)$}.
\end{align*}

\subsubsection{Noise Level Conditioning}

The framework of score-based generative modeling proposed in \citep{song2019generative} requires a score network conditioned on a series of noise levels. We hope to provide the conditioning on noise levels with as few extra parameters as possible. To this end, we add gains and bias terms conditioned on the index $i$ of the noise level $\sigma_i$ in all MLP layers, and share all the parameters across different noise levels. %
A conditional MLP layer for $\network_{\bftheta}(\bfA, \sigma_i)$ is denoted as
\begin{displaymath}
 	f_i(\bfA) = \operatorname{activate}((\mathbf{W}\bfA + \mathbf{b})\bfalpha_i + \bfbeta_i)
\end{displaymath}
where $\bfalpha_i, \bfbeta_i$ are learnable parameters for each noise level $\sigma_i$ and $\operatorname{activate}(\cdot)$ denotes the activation function. We empirically found that this implementation of noise conditioning achieves similar performance to  separately training a score network for each noise level. 

\subsubsection{Permutation Equivariance of \MODEL{}}

The message passing operations in a graph neural network are guaranteed to be permutation equivariant \citep{keriven2019universal}, as well as edgewise and nodewise operations for graphs. Since operations in \MODEL\space are either message passing or edgewise/nodewise transformations, the edge features produced by \MODEL{} are guaranteed to be permutation equivariant. In the last \MODEL{} layer, each edge feature is one component of the estimated score. Hence \thmref{th:perm-inv} applies to this score network.

\section{RELATED WORK}

\paragraph{Flow-Based Graph Generative Models} In addition to models mentioned in \secref{sec:intro}, there is also an emerging class of graph generative models based on invertible mappings, such as GNF~\citep{liu2019graph} and GraphNVP~\citep{madhawa2019graphnvp}. These models modify the architecture of a graph neural network (GNN) using coupling layers~\citep{dinh2016density} to enable maximum likelihood learning via the change of variables formula. Since GNNs are permutation invariant, both GNF and GraphNVP could be permutation invariant in principle. However, GraphNVP opts not to be permutation invariant because making their model fully permutation invariant hurts the empirical performance. In contrast, GNF is a permutation invariant model. It achieves permutation invariance by first using a permutation equivariant auto-encoder to encode the graph structure into a set of node features, and then model the distribution of the node features using reversible graph neural networks.

\paragraph{GNNs that Learn Edge Features} Although the majority of GNNs focus on node feature learning, (\eg, node classification tasks), there are GNNs, prior to our \MODEL{}, that have intermediate edge features as well. For example, Graph Attention Networks \citep{velivckovic2017graph} compute an attention coefficient for each edge during message passing (MP) steps. \cite{gong2019exploiting} further explored methods to utilize edge features during the MP steps, such as using normalized attention coefficients to construct a new adjacency matrix for the next MP step, and passing the message simultaneously on multi-input adjacency matrices. However, the model in \cite{gong2019exploiting} is not designed for predicting edge features, and the capability to make edgewise prediction is limited by the normalizing operation and the restrictive form of attentions. \cite{kipf2018neural} proposed a GNN-based VAE model for relational inference for interacting systems. Contrary to their model which predicts edge information based on only node features, our model takes a weighted graph without node features.

\raggedbottom

\section{EXPERIMENTS}

\begin{figure*}[t]
\centering
\includegraphics[width=0.88\linewidth]{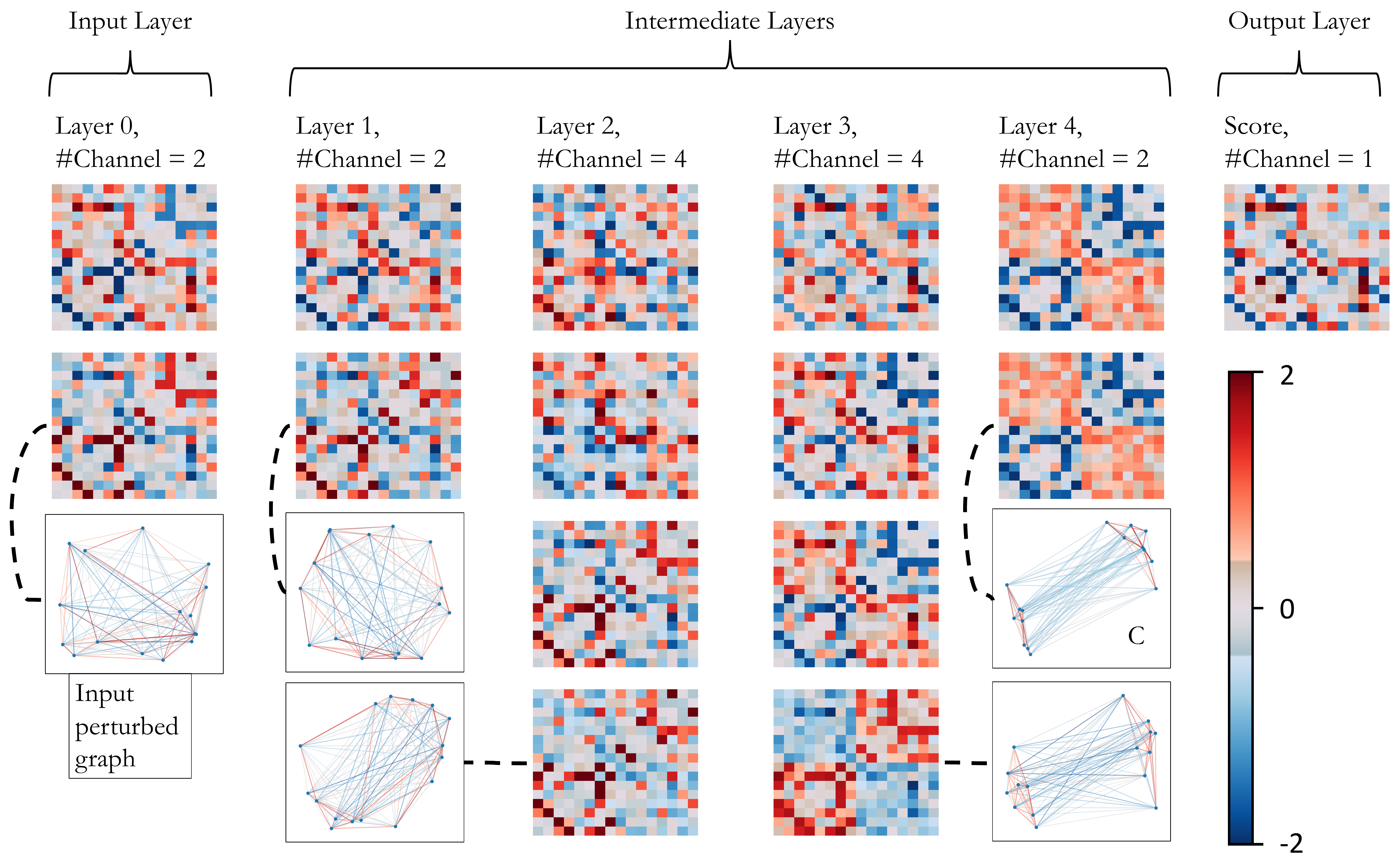}
\caption{Visualization of channels for a pre-trained \MODEL\space model on the Community-small dataset. The model is trained with a single noise level $\sigma=0.6$. The input is a community graph, but perturbed with Gaussian noise with $\sigma = 0.6$. The edge weights of each adjacency matrix are standardized to zero mean and unit variance. Since our model is agnostic to different permutations of nodes, we chose a specific ordering so that the adjacency matrices of community graphs possess a block diagonal form.
We visualize one adjacency matrix for each layer. Sometimes a graph is less visually interpretable, and we instead visualize its complementary graph and mark it with "C". By comparing the graph visualizations for the 3rd, 4th, and the input layers, we observe that the model maps the perturbed graph with no visible structures to a graph with clear "community" structures.}
\label{fig:channel}
\end{figure*}

\subsection{Learning Graph Algorithms}

In this section, we empirically demonstrate the power of the proposed \MODEL\space model on edgewise prediction tasks. %
In particular, we reduce several classic graph algorithms to the task of predicting whether each edge is in the solution set or not. The training data include a graph and the corresponding solution set, and we train our models to fit the solution set by minimizing the cross-entropy loss. %

\paragraph{Setup} To verify the ability of \MODEL\space of making edgewise dense predictions, we tested \MODEL\space on learning classic graph algorithms, by labeling all the edges in a graph to indicate whether an edge is in the solution set or not. We choose two simple tasks, 1) Shortest Path (SP) between a given pair of nodes, and 2) Maximum Spanning Tree (MST) of a given graph. The solution set of SP corresponds to a path connecting the pair of nodes with the shortest length, while the solution set of MST is the collection of all edges inside the maximum spanning tree. For both tasks, all the graphs are randomly sampled from the Erd\H{o}s and R{\'e}nyi model (E-R) \citep{erdos1960evolution} with $n = 12$ and $p = 0.3$. For weighted graphs, all the edge weights are uniformly sampled from $[0,1]$. A prediction is considered correct if and only if all the labels of the graph are correct. We calculate the accuracy over a fixed test set as the metric. For the baseline model, we use vanilla GIN \citep{xu2018powerful}.

\paragraph{Training} During training, we generate the training data dynamically on the fly and use the cross-entropy loss as the training objective for both tasks.%

\begin{table}[t]
\centering
\begin{tabular}{@{}llll@{}}
\toprule
Model  & SP (UW) & SP (W) & MST (W)  \\ \midrule
GIN    & 0.57  & 0.12 & 0.20 \\
\MODEL & \textbf{0.60}  & \textbf{0.92} & \textbf{0.84} \\ \bottomrule
\end{tabular}
\caption{The test set accuracy of \MODEL\space vs. GIN on learning the shortest path (SP) and maximum spanning tree (MST) algorithms. "UW" and "W" stand for "unweighted" and "weighted" respectively. Since the training set is dynamically generated, the performance on (newly generated) training set and test set has no difference. Note that for unweighted graphs, there can be more than one shortest path for a given pair of nodes, and the accuracy is underestimated as we randomly picked one as the ground truth, in which case an accuracy of 0.6 is pretty non-trivial.}
\label{tab:subgraph-res}
\end{table}

\paragraph{Results}
All results are provided in \tabref{tab:subgraph-res}. We observe that \MODEL\space performs similarly to GIN for unweighted graphs, but achieves much better performance when graphs are weighted. This confirms that \MODEL{} is more effective for edgewise predictions.

\subsection{Graph Generation Task}

\begin{figure*}[t]
    \centering
	\subfigure[Training data]{
        \includegraphics[width=.23\linewidth]{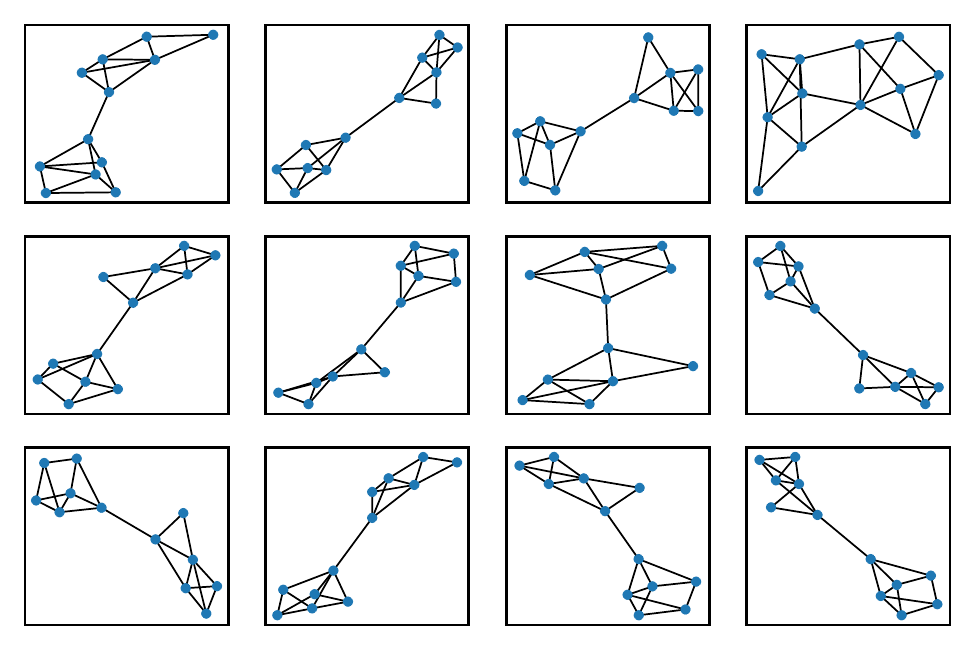}
    } \quad
    \subfigure[\MODEL\space samples]{
        \includegraphics[width=.23\linewidth]{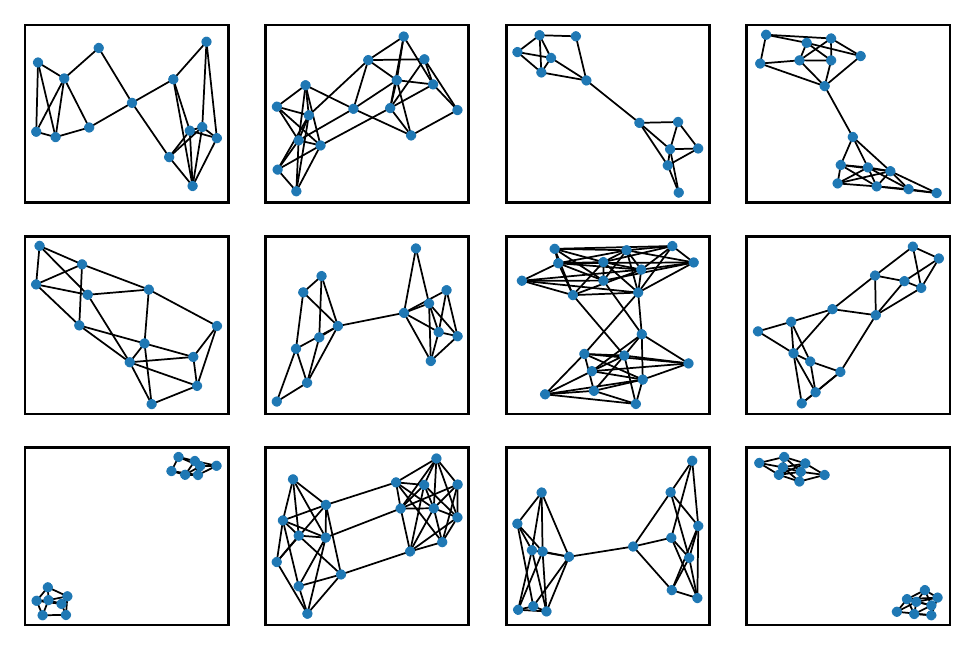}
    }  \quad
    \subfigure[GraphRNN samples]{
        \includegraphics[width=.23\linewidth]{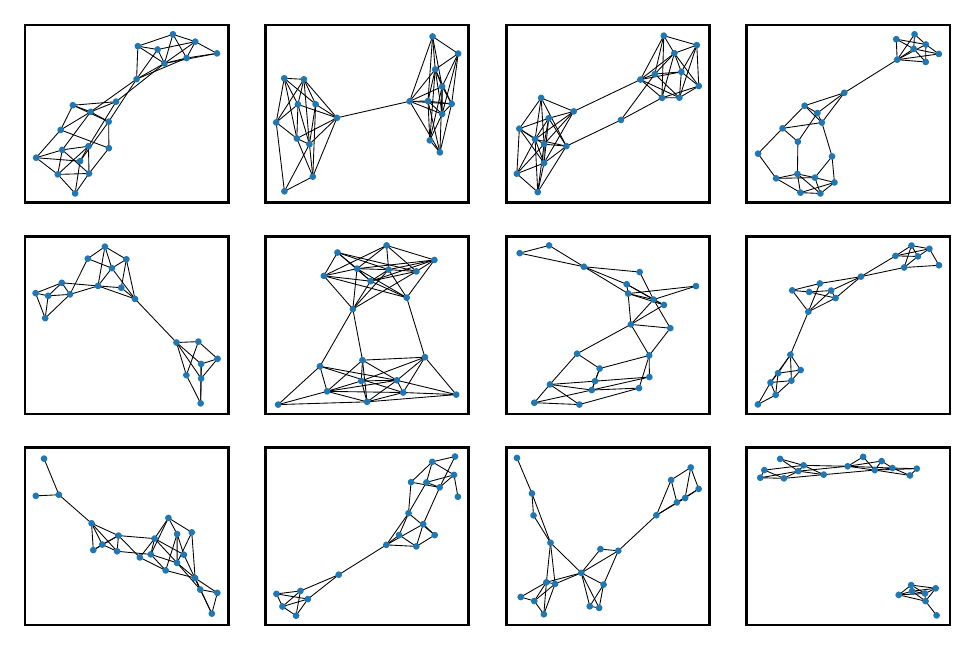}
    }
	\subfigure[Training data]{
        \includegraphics[width=.23\linewidth]{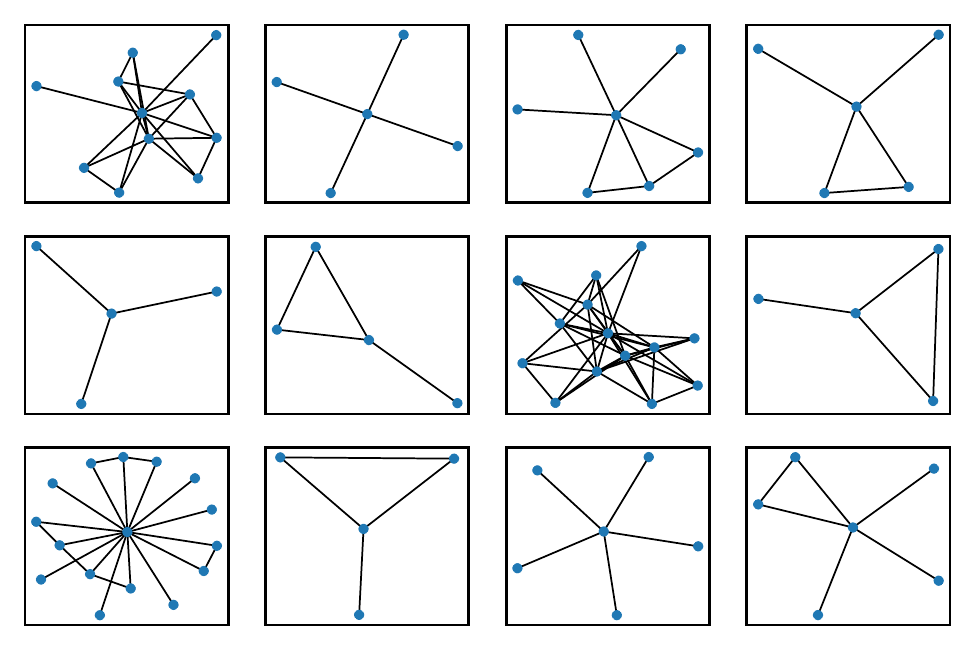}
    } \quad
    \subfigure[\MODEL \space samples]{
        \includegraphics[width=.23\linewidth]{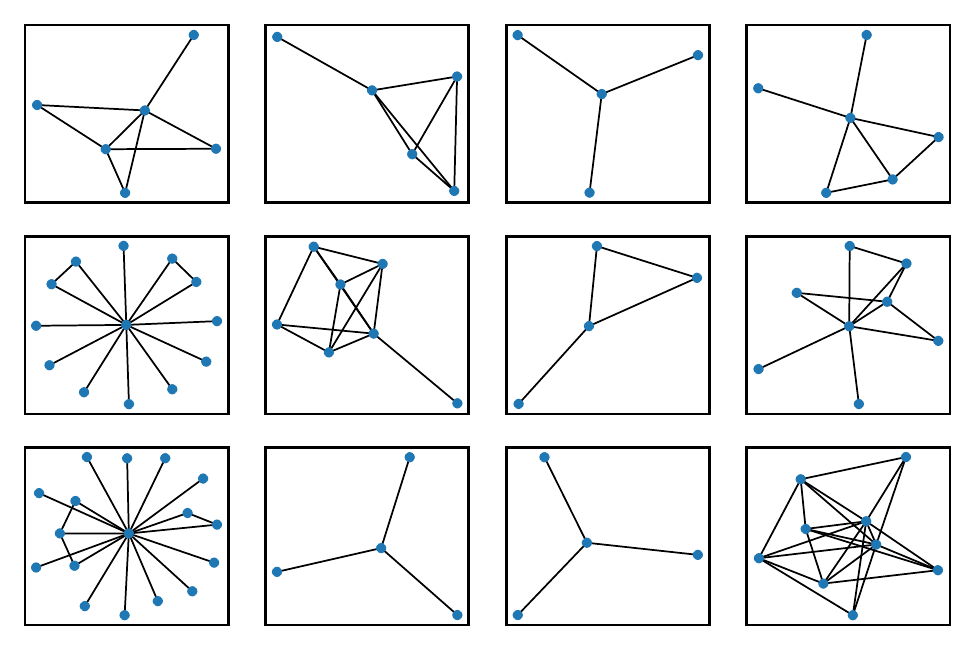}
    }  \quad
    \subfigure[GraphRNN samples]{
        \includegraphics[width=.23\linewidth]{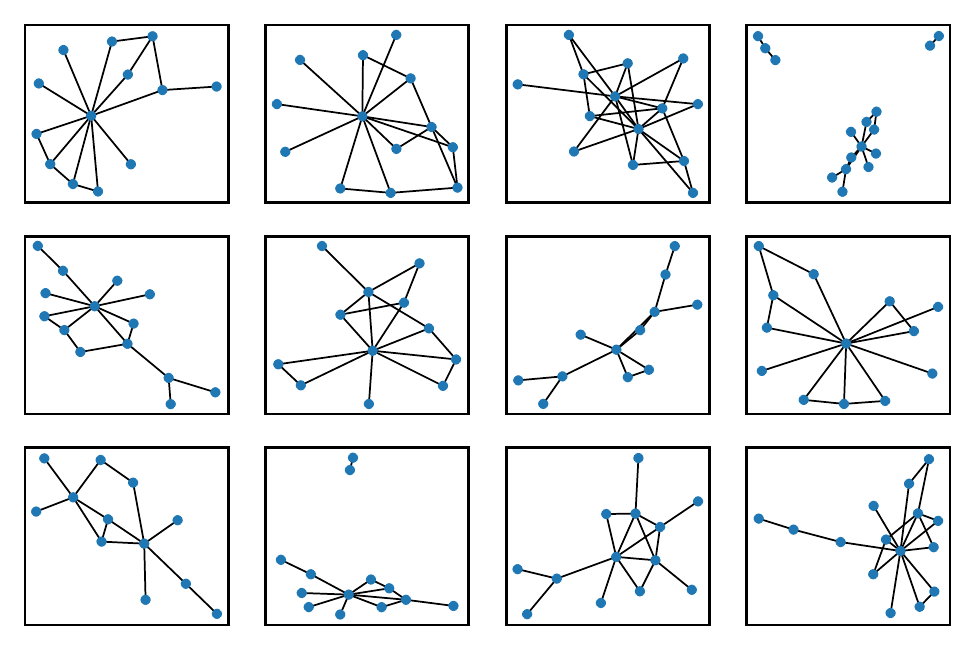}
    }
\caption{Samples from the training data, \MODEL, and GraphRNN, on Community-small (top row) and Ego-small (bottom row).}
\label{fig:mmd-Ego-small}
\end{figure*}

In this section, we demonstrate that our \MODEL{} is capable of producing high-quality graph samples via score-based generative modeling. To better understand learnable multi-channel adjacency matrices in our model, we visualize the intermediate channels in Figure~\ref{fig:channel}, and perform extensive ablation studies.%

\paragraph{Datasets and Baselines} We tested our model on two datasets, Community-small ($12 \leq N \leq 20$) and Ego-small ($4 \leq N \leq 18$), which are also used by \cite{you2018graphrnn}, and \cite{liu2019graph}. See Appendix~\ref{app:exp} for more details. Our baselines include GraphRNN \citep{you2018graphrnn}, Graph Normalizing Flow(GNF) \citep{liu2019graph}, GraphVAE \citep{simonovsky2018graphvae}, and DeepGMG \citep{li2018learning}.

\paragraph{Metrics} To evaluate generation quality, we used maximum mean discrepancy (MMD) over some graph statistics, as proposed by \cite{you2018graphrnn}. We calculated MMD for three graph statistics: 1) degree distribution, 2) cluster coefficient distribution, and 3) the number of orbits with 4 nodes.

\paragraph{Results}
We compare \MODEL{} against baselines and summarize results in \tabref{tab:MMD}. Our model performs comparably to GraphRNN and GNF with respect to most MMD metrics, and outperforms all other methods when considering the overall average of MMDs on two datasets.

\begin{table*}[t]
\centering
\begin{adjustbox}{max width=0.85\textwidth}
\begin{tabular}{@{}llllllllll@{}}
\toprule
\multirow{2}{*}{Model} & \multicolumn{4}{l}{Community-small}                               & \multicolumn{4}{l}{Ego-small}                                     & \multirow{2}{*}{Avg.} \\ \cmidrule(lr){2-9}
                       & Deg.           & Clus.          & Orbit          & Avg.           & Deg.           & Clus.          & Orbit          & Avg.           &                       \\ \midrule
GraphVAE               & 0.350          & 0.980          & 0.540          & 0.623          & 0.130          & 0.170          & 0.050          & 0.117          & 0.370                 \\
DeepGMG                & 0.220          & 0.950          & 0.400          & 0.523          & 0.040          & 0.100          & 0.020          & 0.053          & 0.288                 \\
GraphRNN               & 0.080          & \textbf{0.120} & 0.040          & 0.080          & 0.090          & 0.220          & 0.003          & 0.104          & 0.092                 \\
GNF                    & 0.200          & 0.200          & 0.110          & 0.170          & \textbf{0.030} & 0.100          & \textbf{0.001} & \textbf{0.044} & 0.107                 \\
\MODEL                 & \textbf{0.053} & 0.144          & \textbf{0.026} & \textbf{0.074} & 0.052          & \textbf{0.093} & 0.007          & 0.050          & \textbf{0.062}        \\ \midrule
GraphRNN (1024)         & 0.030          & \textbf{0.010} & \textbf{0.010} & \textbf{0.017} & 0.040          & 0.050          & 0.060          & 0.050          & 0.033                 \\
GNF (1024)              & 0.120          & 0.150          & 0.020          & 0.097          & \textbf{0.010} & 0.030          & \textbf{0.001} & 0.014          & 0.055                 \\
\MODEL{} (1024)          & \textbf{0.006} & 0.127          & 0.018          & 0.050          & \textbf{0.010} & \textbf{0.025} & 0.003          & \textbf{0.013} & \textbf{0.031}        \\ \bottomrule
\end{tabular}
\end{adjustbox}

\caption{MMD results of various graph generative models. Rows marked with (1024) mean the corresponding number of samples is 1024; otherwise, the number of samples equals the size of the test set. Apart from three MMD statistics, we also provide their average values, noted as "Avg.". The rightmost column is the overall average of all MMDs on two datasets. For baselines, we directly ported the results from \cite{you2018graphrnn} and \cite{liu2019graph}. For a fair comparison, we followed the settings of evaluation in \cite{liu2019graph}.}
\label{tab:MMD}
\end{table*}

\subsubsection{Understanding Intermediate Channels}
Intuitively, the intermediate channels of \MODEL\space should be analogous to those in convolutional neural networks (CNN) feature maps. Since channels of feature maps can be visualized as images in CNNs, we propose to visualize each channel of multi-channel adjacency matrices as a graph. The \MODEL{} layers should be able to map an input graph to intermediate graphs that possess interpretable semantics.

In \figref{fig:channel}, we visualize the channels of intermediate adjacency matrices for a \MODEL{} model trained on the Community-small dataset. We observe that the model processes a perturbed community graph with no clearly visible structures to a graph with a structure of two equal-sized communities.

As implied by the training objective \eqref{eq:loss}, the score network $\bfs_\bftheta(\tilde{\bfA}, \sigma)$ can perfectly predict the ground truth score, \ie, $\frac{\tilde{\bfA}-\bfA}{\sigma^2}$, if it can map the noise-perturbed graph $\tilde{\bfA}$ to the true (noise-free) graph $\bfA$ in some of the intermediate channels. Therefore, an ideal score network should be able to 1) understand the structure of a given graph, before 2) mapping a perturbed graph to the corresponding denoised graph. While previous GNNs are designed for the former task, \MODEL\space is especially capable of solving the latter one.

\subsubsection{Ablation Studies}
\begin{table}[ht!]
\centering
\begin{adjustbox}{max width=0.35\textwidth}
\begin{tabular}{@{}llllll@{}}
\toprule
\multirow{2}{*}{A*} & \multirow{2}{*}{C*} & \multicolumn{2}{l}{\textbf{Community-small}}                                                                                                             & \multicolumn{2}{l}{\textbf{Ego-small}}                                                                                                                  \\ \cmidrule(l){3-6} 
                    &                     & \begin{tabular}[c]{@{}l@{}}Train \\ loss\end{tabular} & \begin{tabular}[c]{@{}l@{}}Test \\ loss\end{tabular} & \begin{tabular}[c]{@{}l@{}}Train \\ loss\end{tabular} & \begin{tabular}[c]{@{}l@{}}Test \\ loss\end{tabular}  \\ \midrule
N                   & N                   & 140                                               & 140                                               & 14                                               & 17                                               \\
Y                   & N                   & 120                                               & 120                                              & 12                                               & 15                                               \\
N                   & Y                   & 110                                               & 120                                              & 13                                               & 15                                               \\
Y                   & Y                   & \textbf{98}                                               & \textbf{96}                                              & \textbf{10}                                               & \textbf{12 }                                            \\ \bottomrule
\end{tabular}
\end{adjustbox}
\caption{Ablation experiments on Community-small and Ego-small datasets. The training and test losses are defined by \eqref{eq:loss}. A* indicates whether the adjacency matrix is learnable, and C* indicates whether the intermediate adjacency matrices have multi-channels. }
\label{tab:ablation}
\end{table}

To verify the importance of intermediate adjacency matrices in \MODEL\space to be 1) learnable and 2) multi-channel, we conducted ablative studies on Community-small and Ego-small datasets. We switched on/off the two properties respectively, and provide the performance comparison in \tabref{tab:ablation}. Note that \MODEL{} is equivalent to vanilla GIN when intermediate adjacency matrices are single-channel and non-learnable. As shown in \tabref{tab:ablation}, both properties can improve the expressivity for score modeling, in the sense of reducing the training and test score matching losses. As expected, the performance is optimal when both properties are combined.

\section{CONCLUSION}

We propose a permutation invariant generative model for graphs based on the framework of score-based generative modeling. In particular, we implicitly define a permutation invariant distribution over graph adjacency matrices by modeling the corresponding permutation equivariant score function and sampling with Langevin dynamics. For effective score modeling of graph distributions, we propose a new permutation equivariant GNN architecture, named \MODEL, leveraging trainable, multi-channel adjacency matrices as intermediate layers. Empirically, we demonstrate that \MODEL s \space are more expressive than vanilla GNNs on predicting edgewise features, as evidenced by better performance on the task of learning classic graph algorithms such as shortest paths. Moreover, we show our model can produce samples with quality comparable to existing state-of-the-art models. As one future direction, we hope to improve the scalability of our model by reducing the computational complexity, using techniques such as graph pooling~\citep{ying2018hierarchical}.

\subsubsection*{Acknowledgements}
This research was supported by Intel Corporation, Amazon AWS, TRI, NSF (\#1651565, \#1522054, \#1733686), ONR (N00014-19-1-2145), AFOSR (FA9550-19-1-0024).

\bibliography{main}

\allowdisplaybreaks

\appendix
\onecolumn

\section{EXPERIMENTAL DETAILS}\label{app:exp}

We implement our model using PyTorch \citep{paszke2019pytorch}. The optimization algorithm is Adam \citep{kingma2014adam}. Our code is available at \url{https://github.com/ermongroup/GraphScoreMatching}.

\subsection{Hyperparameters}
For the noise levels $\{\sigma_i\}_{i=1}^L$, we chose $L=6$ and $\{\sigma_i\}_{i=1}^L=[1.6, 0.8, 0.6, 0.4, 0.2, 0.1]$. Empirically, we found those settings work well for all the generation experiments. Note that since all the edge weights in training data (\ie, $\bfA_{i,j}$ in \eqref{eq:noise}) are either 0 or 1, $\sigma_L=0.1$ is small enough for the quantizing operation \eqref{eq:quantize} to prefectly recover the perturbed graph with high probability. 

In the sampling process, we set the number of sampling steps for each noise level to be $T=1000$. Apart from the coefficient $\epsilon$ in step size $\alpha_i = \epsilon \cdot \sigma_i^2/\sigma_L^2$ in Langevin dynamics, we added another scaling coefficient $\epsilon_s$, since it is a common practice of applying Langevin dynamics. We chose the value of the hyper-parameters based on the MMD metrics on the validation set, which contains 32 samples from the training set.
\begin{displaymath}
  \tilde{\bfx}_{t} \gets \tilde{\bfx}_{t-1} + \dfrac{\alpha_i}{2} \bfs_\bftheta(\tilde{\bfx}_{t-1}, \sigma_i) + \epsilon_s\sqrt{\alpha_i}~ \bfz_t
\end{displaymath}

For the network architecture, we used 4 message-passing steps for each GIN, and stacked 5 \MODEL \space layers. The maximum number of channels of all \MODEL \space layer is 4. The maximum size of node features is 16.

\subsection{Dataset}\label{sec:dataset}

\begin{itemize}
  \item \textbf{Community-small:} The graphs are constructed by two equal-sized communities, each of which is generated by E-R model \citep{erdos1960evolution}, with $p=0.7$. For each graph with $N$ nodes, we randomly add $0.05N$ edges between the two communities. The range of total number of nodes per graph is $12 \leq N \leq 20$.
  \item \textbf{Ego-small:} One-hop ego graphs extracted from the Citeseer network \citep{sen2008collective}. The range of node numbers per graph is $4 \leq N \leq 18$.
\end{itemize}

\section{PROPERTIES OF PERMUTATION INVARIANT FUNCTIONS}\label{app:proof}

\subsection{Permutation}

\begin{definition} 
	(Permutation Operation on Matrix) Let $[N] \stackrel{\text { def. }}{=}\{1, \ldots, N\}$. Denote the set of permutations $\pi: [N] \rightarrow [N]$ as $\Pi_N$. The node permutation operation on a matrix $\bfA \in \mathbb{R}^{N \times N}$ is defined by $\bfA^{[\pi]}_{i,j} = \bfA_{\pi(i), \pi(j)}$. 
\end{definition}

\subsection{Permutation Invariant}

\begin{definition}
(Permutation Invariant Function) A function $f$ with $\mathbb{R}^{N\times N}$ as its domain is permutation invariant i.f.f. $\forall \bfA \in \mathbb{R}^{N\times N}, \forall \pi \in \Pi_N, \quad f(\bfA^{[\pi]}) = f(\bfA)$.
\end{definition}

\subsection{Permutation Equivariant}

\begin{definition}
	(Permutation Equivariant Function) A function $\network: \mathbb{R}^{N\times N} \rightarrow \mathbb{R}^{N\times N}$ is permutation equivariant i.i.f. $\forall \bfA\in \mathbb{R}^{N\times N}, \forall \pi \in \Pi_N, \quad \network(A^{[\pi]}) = \left(\network(A)\right)^{[\pi]}$.
\end{definition}

\subsection{Relationship between Permutation Invariance and Permutation Equivariance}

\begin{definition}
(Implicitly Defined Scalar Function)
A function $\network: \mathbb{R}^{N\times N} \rightarrow \mathbb{R}^{N\times N}$ defines a gradient vector field on $\mathbb{R}^{N\times N}$. Veiw $\network$ as the gradient of a scalar value function $f_{\network}(\bfA): \mathbb{R}^{N\times N} \rightarrow \mathbb{R}$. Define $f_{\network}(\bfA) = \int_{\gamma[\mathbf{0}, \bfA]}\langle\network(\mathbf{X}), \operatorname{d}\mathbf{X}\rangle_{\mathrm{F}} + C$, where $\mathbf{0} = \{0\}_{N\times N}$, $\gamma[\mathbf{0}, \bfA]$ is any curve from $\mathbf{0}$ to $\bfA$ and $C \in \mathbb{R}$ is a constant. 
\end{definition}

Under this definition, a vector function $\network$ defines a scalar function $f_{\network}$ implicitly. 

\begin{lemma}
	(Permutation Invariance of Frobenius Inner Product) For any $A, B \in \mathbb{R}^{N\times N}$, the Frobenius inner product of $A,B$ is $\langle\bfA, \mathbf{B}\rangle_{\mathrm{F}}=\sum_{i, j} {A_{i j}} B_{i j}=\operatorname{tr}({\bfA^{T}} \mathbf{B})$. Frobenius inner product operation is permutation invariant, \ie, $\forall \pi \in \Pi_N, \quad \langle\bfA^{[\pi]}, \mathbf{B}^{[\pi]}\rangle_{\mathrm{F}} = \langle\bfA, \mathbf{B}\rangle_{\mathrm{F}}$. %
\end{lemma}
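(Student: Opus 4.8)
The plan is to reduce the claim to a simple reindexing of a double sum that exploits the bijectivity of $\pi$. I would start from the entrywise form of the Frobenius inner product, $\langle A, B\rangle_{\mathrm{F}} = \sum_{i,j} A_{ij} B_{ij}$, and apply it to the permuted matrices. By the definition of the permutation operation, $A^{[\pi]}_{ij} = A_{\pi(i),\pi(j)}$ and likewise $B^{[\pi]}_{ij} = B_{\pi(i),\pi(j)}$, so that $\langle A^{[\pi]}, B^{[\pi]}\rangle_{\mathrm{F}} = \sum_{i,j} A_{\pi(i),\pi(j)} B_{\pi(i),\pi(j)}$.

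The key step is to observe that since $\pi \in \Pi_N$ is a bijection of $[N]$, the induced map $(i,j)\mapsto(\pi(i),\pi(j))$ is a bijection of $[N]\times[N]$ onto itself. Hence as $(i,j)$ ranges over all index pairs, so does $(\pi(i),\pi(j))$, and I may substitute $i' = \pi(i)$ and $j' = \pi(j)$ to rewrite the sum as $\sum_{i',j'} A_{i'j'} B_{i'j'}$, which is exactly $\langle A, B\rangle_{\mathrm{F}}$. This completes the argument.

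An equivalent, slightly more structural route is to encode $\pi$ by its permutation matrix $P$ (with $P_{ik} = \delta_{k,\pi(i)}$), so that $A^{[\pi]} = P A P^{\T}$ and $B^{[\pi]} = P B P^{\T}$. Since permutation matrices are orthogonal, $P^{\T} P = I$, and the cyclic invariance of the trace gives $\langle A^{[\pi]}, B^{[\pi]}\rangle_{\mathrm{F}} = \operatorname{tr}\bigl((P A P^{\T})^{\T}(P B P^{\T})\bigr) = \operatorname{tr}(P A^{\T} B P^{\T}) = \operatorname{tr}(A^{\T} B) = \langle A, B\rangle_{\mathrm{F}}$. I would likely present the elementary reindexing as the main proof and mention this matrix computation as a remark, since it makes the orthogonality of $P$ — the structural reason behind the invariance — explicit.

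There is no real obstacle here: the only point requiring care is justifying the reindexing, which rests entirely on the bijectivity of $\pi$ (equivalently, the orthogonality of $P$). The result is a short computation, and its role is to serve as the basic building block for the line-integral argument in \thmref{th:perm-inv}: the permutation invariance of the integrand $\langle \network(\mathbf{X}), \mathrm{d}\mathbf{X}\rangle_{\mathrm{F}}$ under an equivariant vector field $\network$ is precisely what will make the implicitly defined scalar $f_{\network}$ permutation invariant.
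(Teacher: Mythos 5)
Your proof is correct. Note that the paper actually states this lemma without any proof at all (it is used only as a known fact in the chain of equalities proving Theorem~\ref{th:perm-inv}), so your reindexing argument — resting on the bijectivity of $(i,j)\mapsto(\pi(i),\pi(j))$ — supplies exactly the justification the paper leaves implicit, and your permutation-matrix/trace remark is a valid alternative formulation of the same fact.
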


\subsection{Proof of \thmref{th:perm-inv}}\label{sec:proof}

\begin{proof}
	\begin{align*}
		& \forall \bfA \in \mathbb{R}^{N\times N}, \forall \pi \in \Pi_N, \\
		& f(\bfA^{[\pi]}) - f(\mathbf{0}^{[\pi]}) \\
		=& \int_{\gamma[\mathbf{0}^{[\pi]}, \bfA^{[\pi]}]}\langle\network(\mathbf{X}), \operatorname{d}\mathbf{X}\rangle_{\mathrm{F}} \\
		=& \int_{\gamma[\mathbf{0}, \bfA]}\langle\network(\mathbf{X}^{[\pi]}), \operatorname{d}\left(\mathbf{X}^{[\pi]}\right)\rangle_{\mathrm{F}} \\
		=& \int_{\gamma[\mathbf{0}, \bfA]}\langle\left(\network(\mathbf{X})\right)^{[\pi]}, \left(\operatorname{d}\mathbf{X}\right)^{[\pi]}\rangle_{\mathrm{F}} \\		
		=& \int_{\gamma[\mathbf{0}, \bfA]}\langle\network(\mathbf{X}), \operatorname{d}\mathbf{X}\rangle_{\mathrm{F}} \\
		=& f(\bfA) - f(\mathbf{0}) \\ 
		& \ie \quad f(\bfA^{[\pi]}) = f(\bfA)
	\end{align*}
\end{proof}

\pagebreak

\section{EXTRA SAMPLES}

\begin{figure*}[ht]
    \centering
	\subfigure[Training data]{
        \includegraphics[width=.27\linewidth]{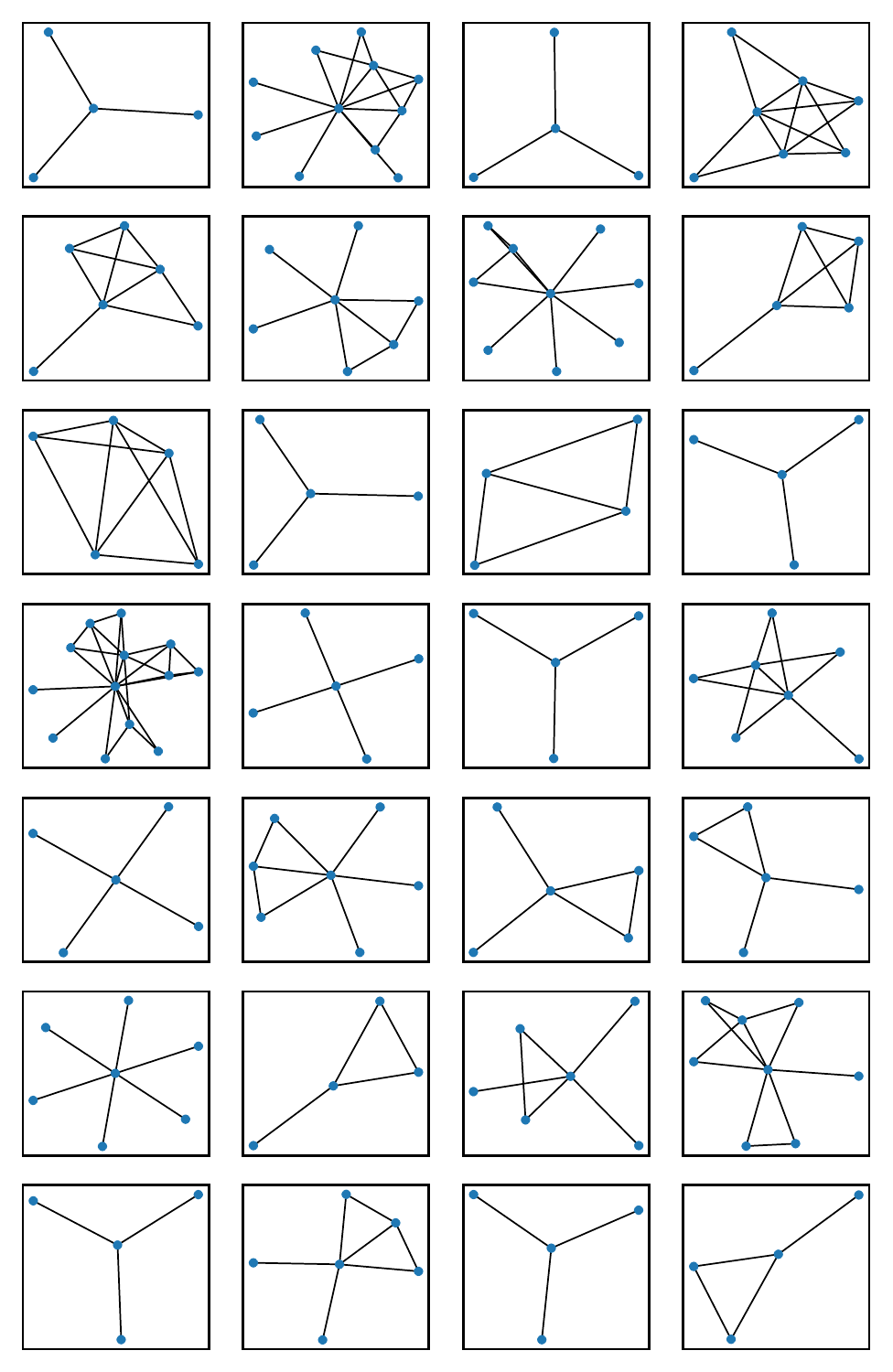}
    } \quad
    \subfigure[\MODEL \space samples]{
        \includegraphics[width=.27\linewidth]{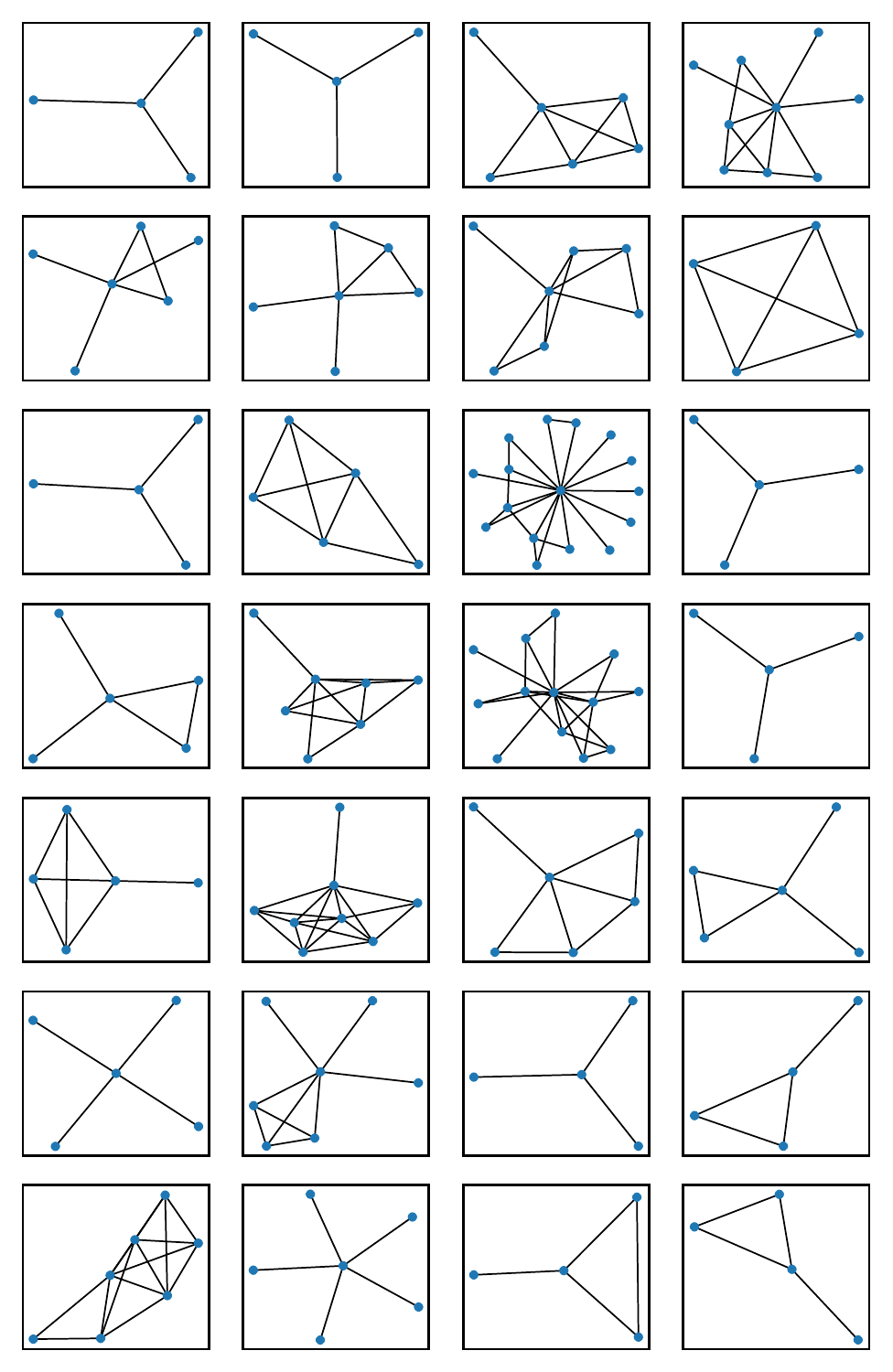}
    }  \quad
    \subfigure[GraphRNN samples]{
        \includegraphics[width=.27\linewidth]{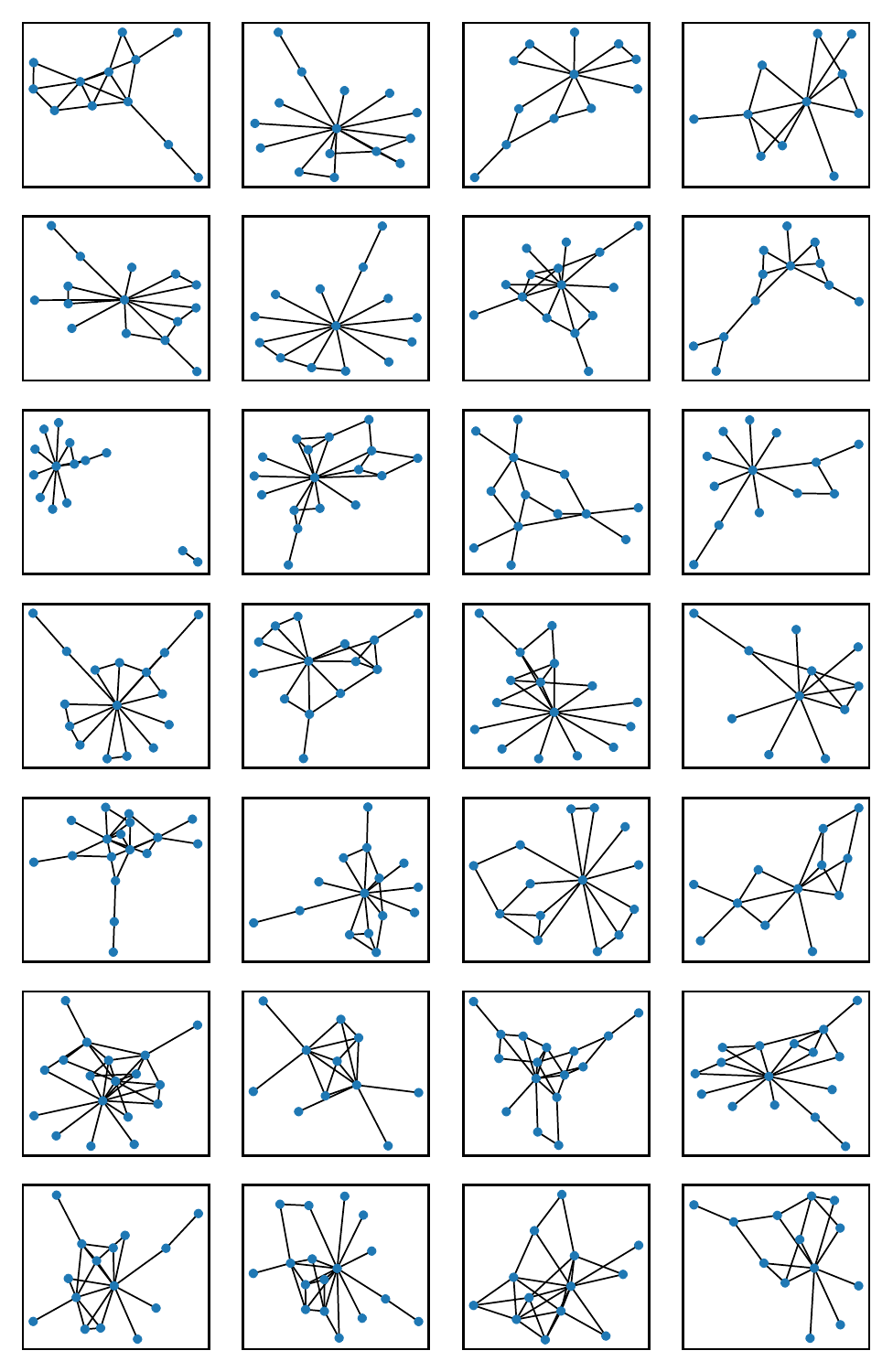}
    }
\caption{Extra samples from the training data, \MODEL, and GraphRNN, on Ego-small.}
\end{figure*}

\begin{figure*}[ht]
    \centering
	\subfigure[Training data]{
        \includegraphics[width=.27\linewidth]{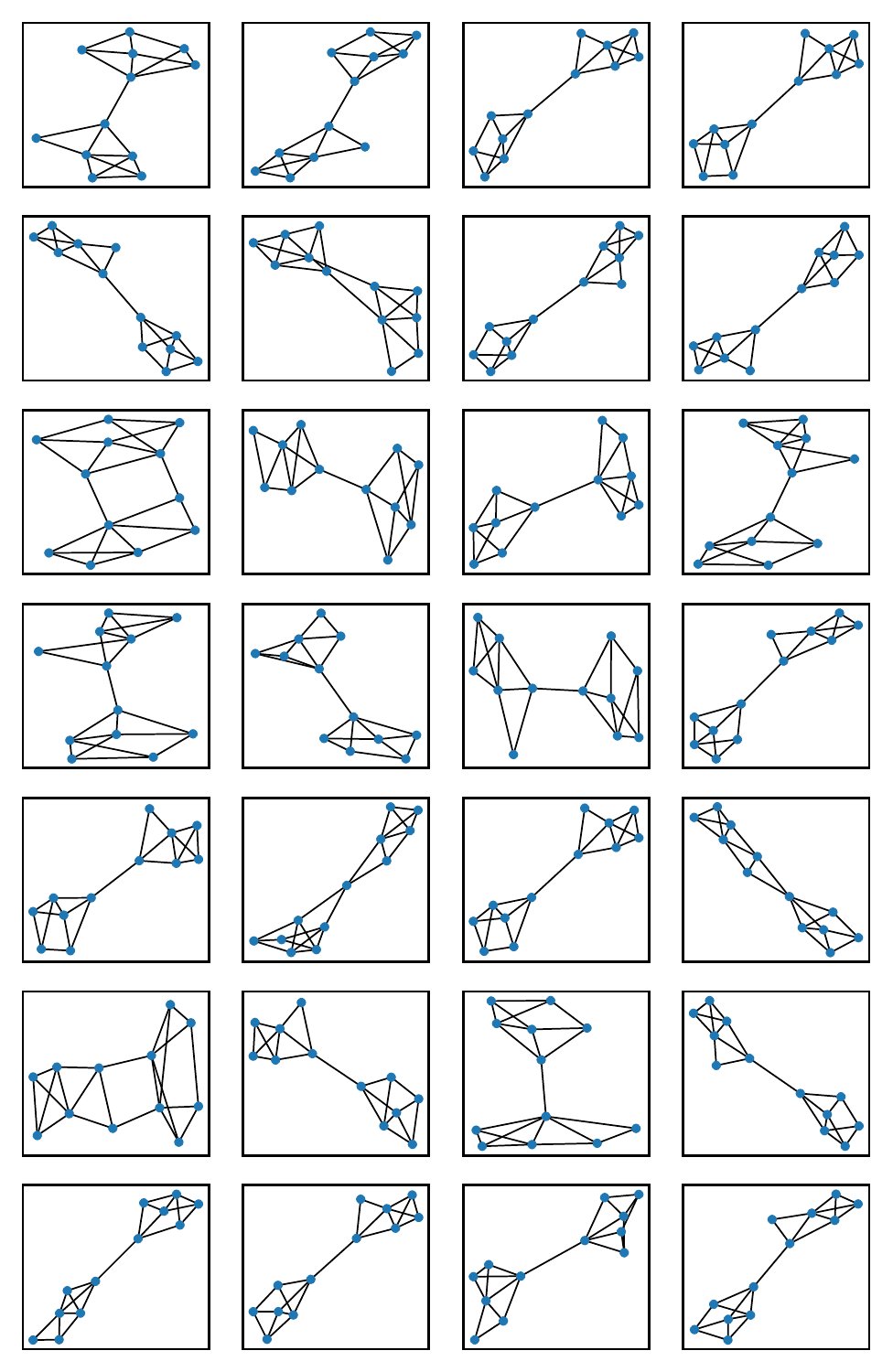}
    } \quad
    \subfigure[\MODEL \space samples]{
        \includegraphics[width=.27\linewidth]{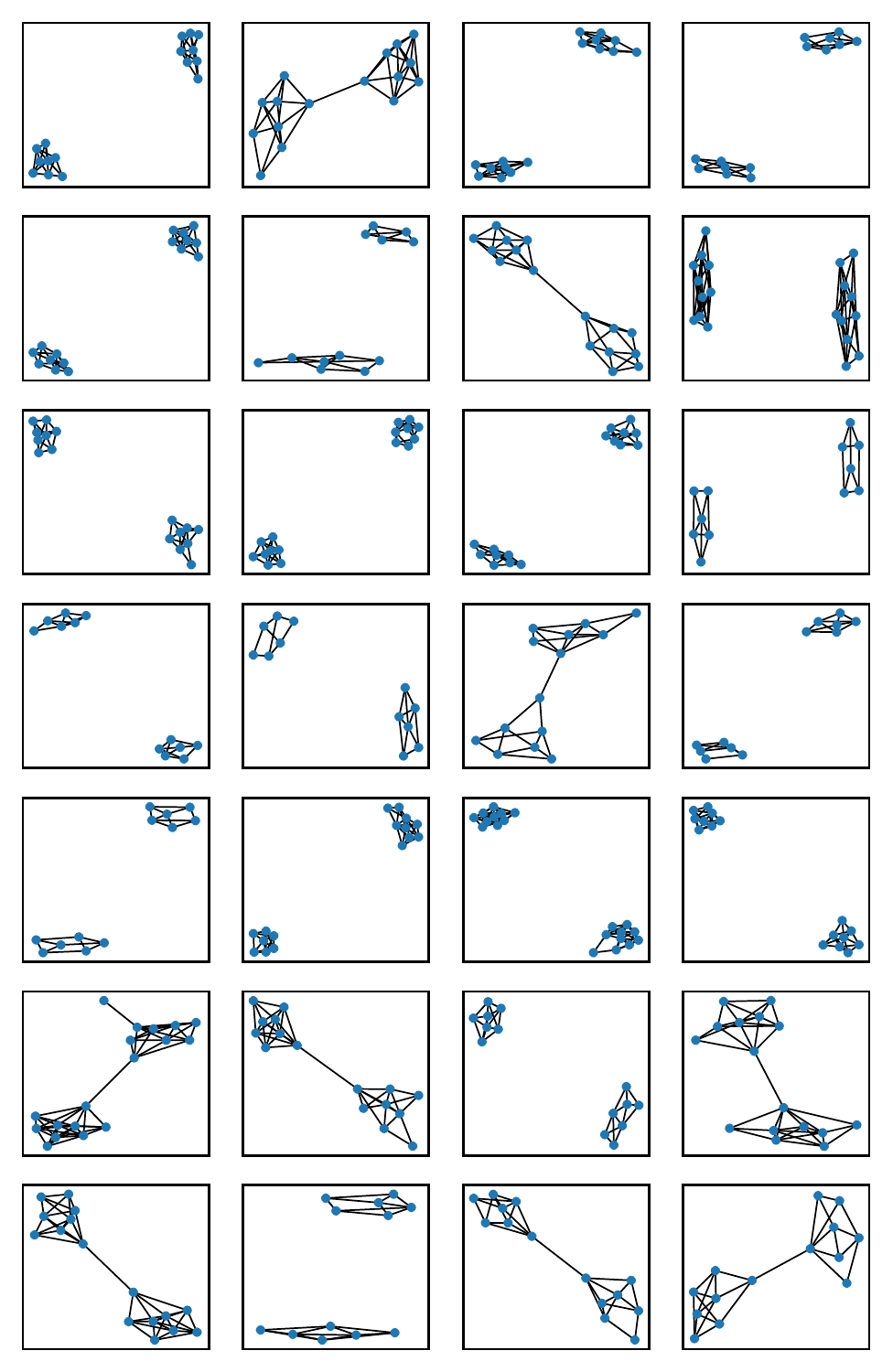}
    }  \quad
    \subfigure[GraphRNN samples]{
        \includegraphics[width=.27\linewidth]{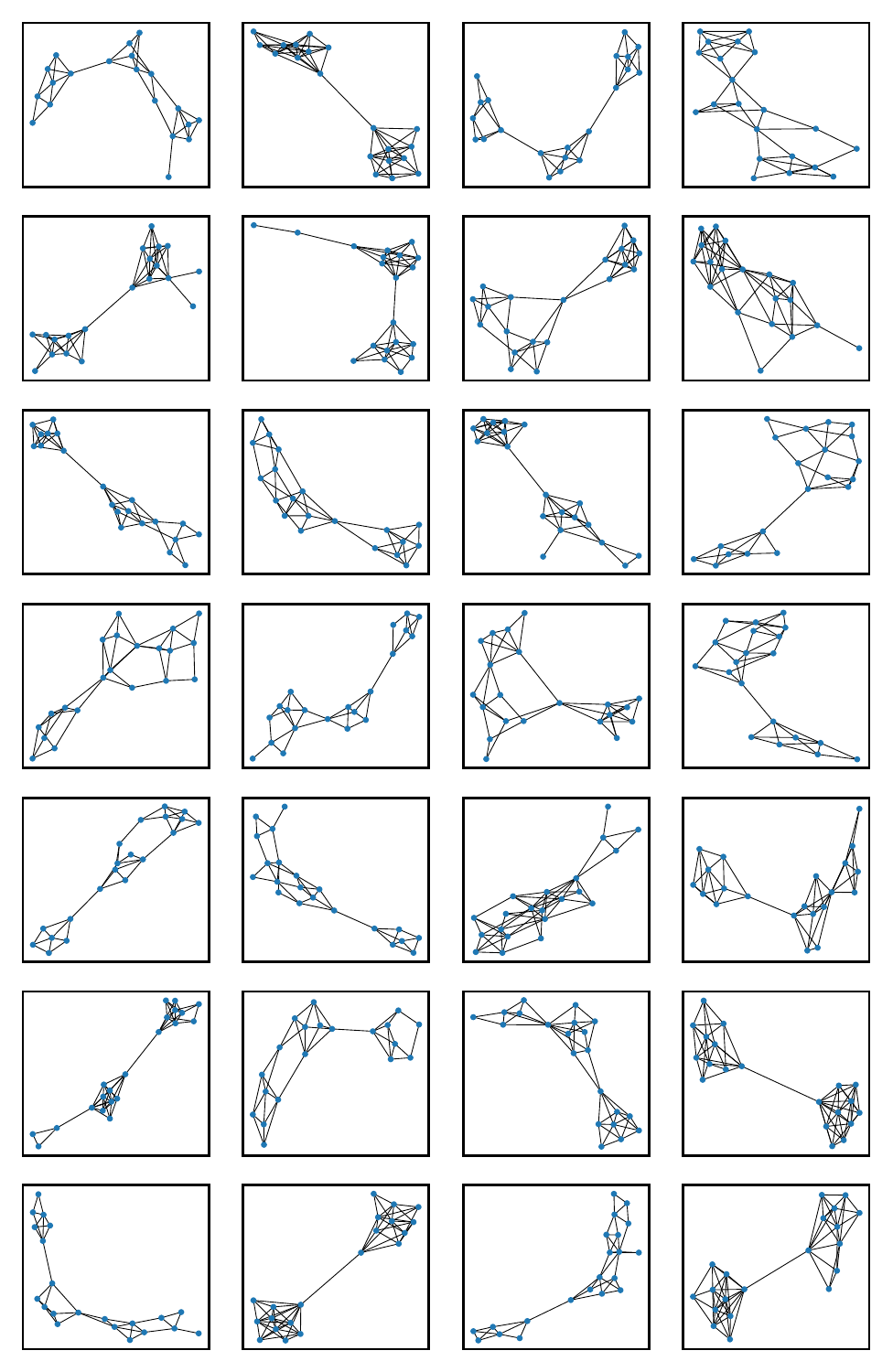}
    }
\caption{Extra samples from the training data, \MODEL, and GraphRNN, on Community-small.}

\end{figure*}

\begin{figure*}[ht]
    \centering
	\subfigure[Training data]{
        \includegraphics[width=.35\linewidth]{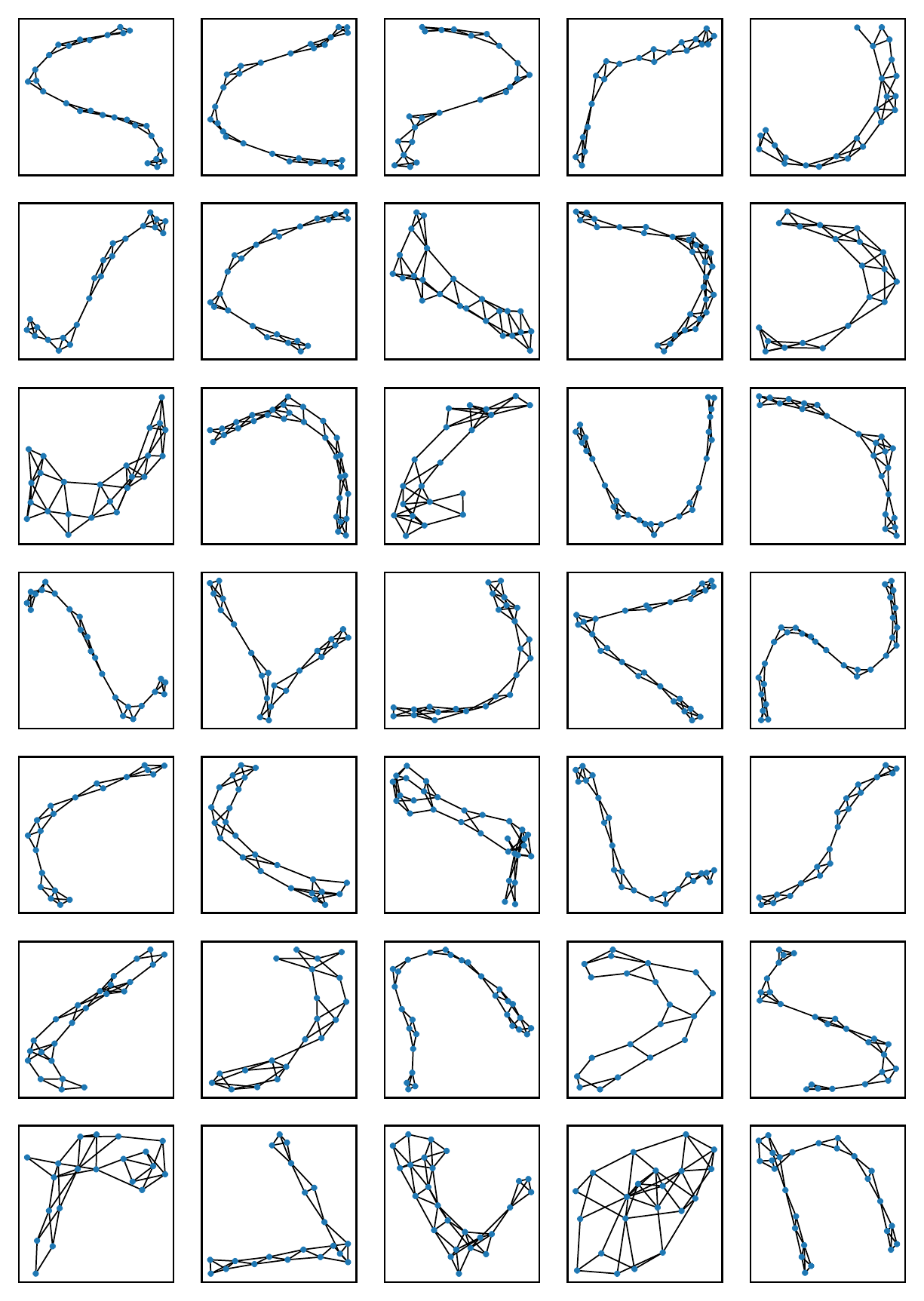}
    } \quad
    \subfigure[\MODEL \space samples]{
        \includegraphics[width=.35\linewidth]{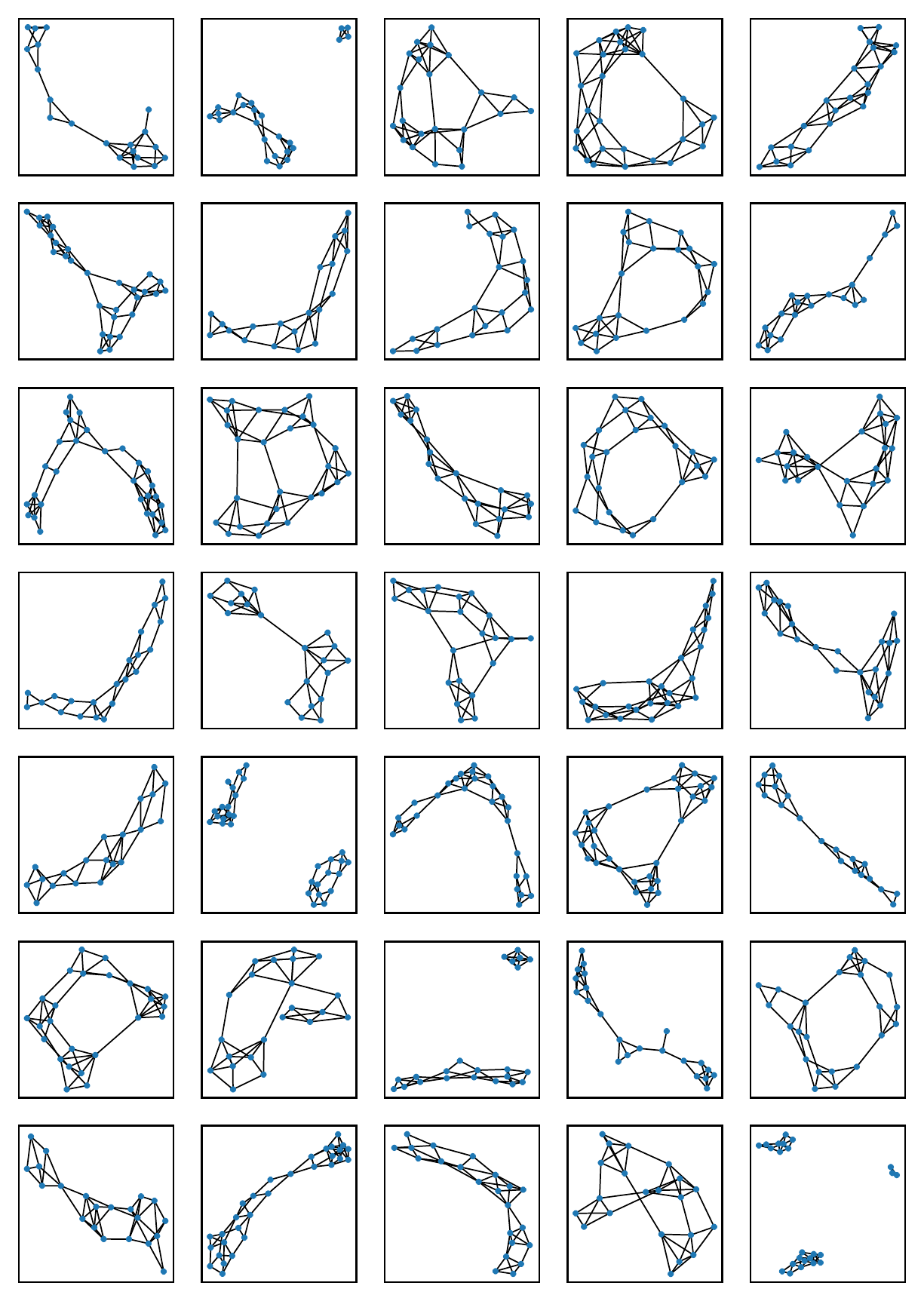}
    }
\caption{Extra samples from the training data and EDP-GNN, on the Protein dataset \citep{dobson2003distinguishing}, with the number of node $20 \le N \le 30$.}
\end{figure*}

\begin{figure*}[ht]
    \centering
	\subfigure[Training data]{
        \includegraphics[width=.35\linewidth]{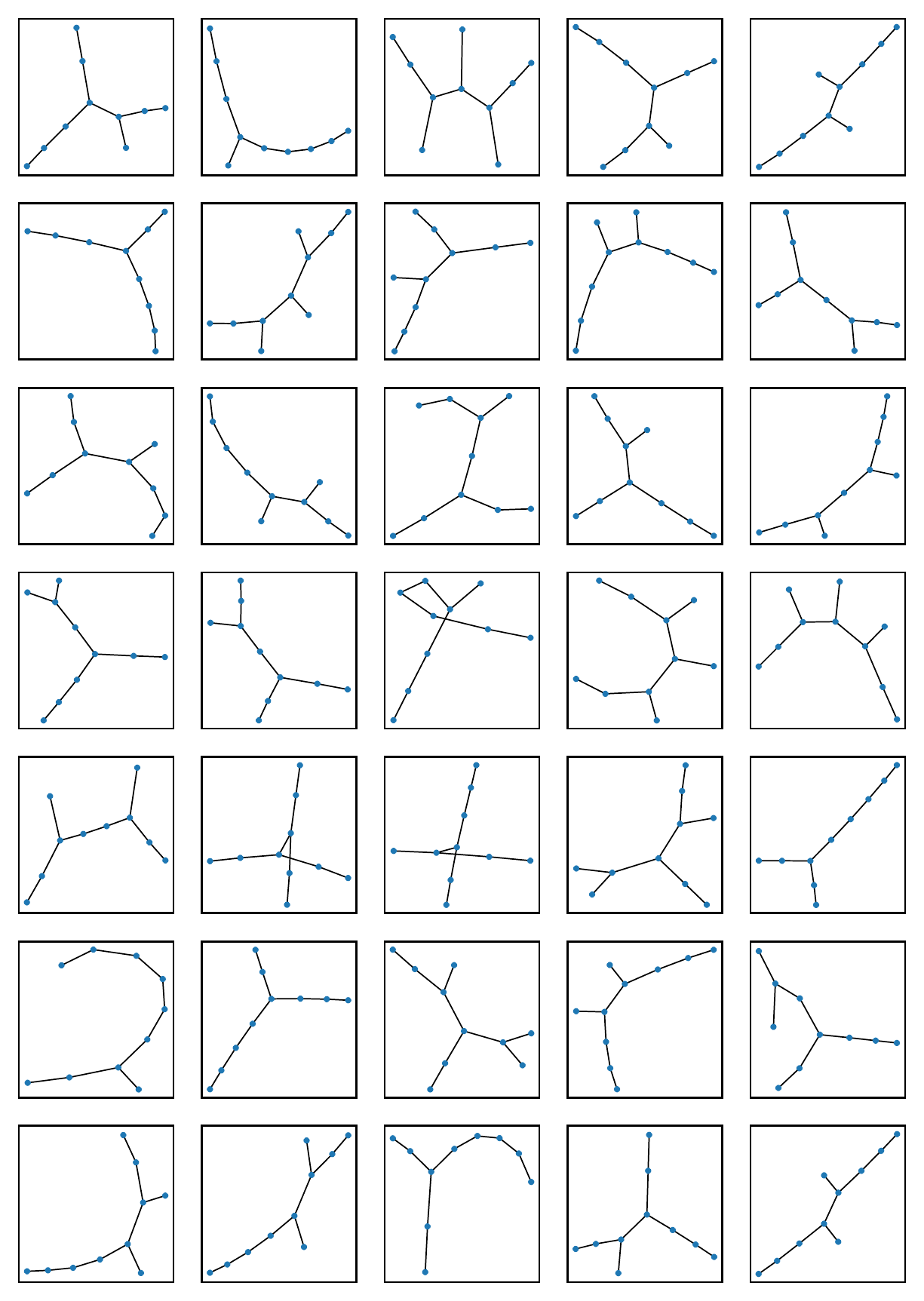}
    } \quad
    \subfigure[\MODEL \space samples]{
        \includegraphics[width=.35\linewidth]{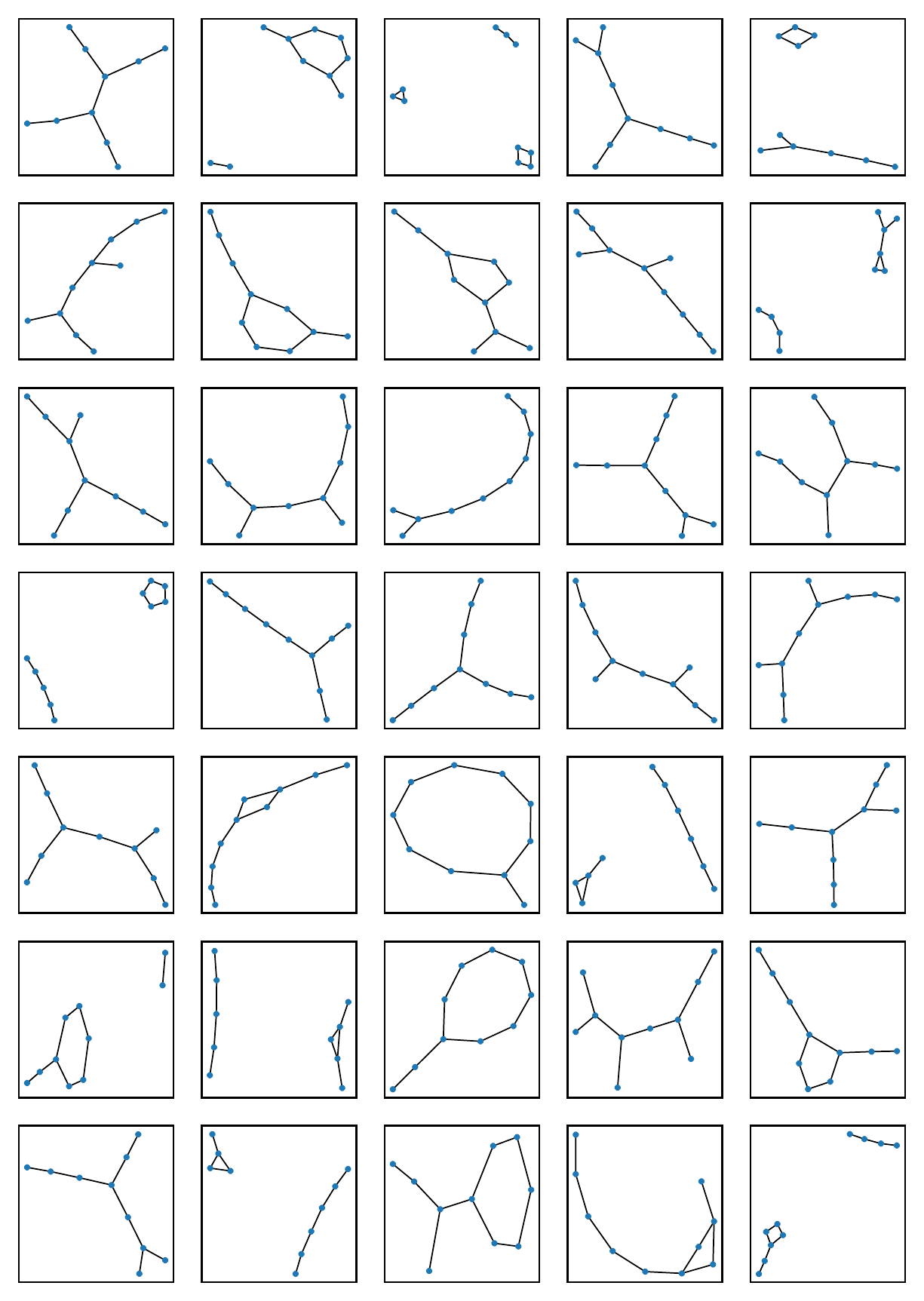}
    }
\caption{Extra samples from the training data and EDP-GNN, on the Lobster graph dataset \citep{golomb1996polyominoes}, with the number of node $N = 10$.}
\end{figure*}

\end{document}